\algnewcommand{\LeftComment}[1]{\Statex \(\triangleright\) #1}
\newcommand{\suppress}[1]{}
\newcommand{\todo}[1]{{\color{red} TODO: {#1}}}
\newenvironment{sketch}{%
	\proof}{\endproof}
\providecommand{\customgenericname}{}
\newcommand{\newcustomtheorem}[2]{%
	\newenvironment{#1}[1]
	{%
		\renewcommand\customgenericname{#2}%
		\renewcommand\theinnercustomgeneric{##1}%
		\innercustomgeneric
	}
	{\endinnercustomgeneric}
}
\newcommand{\set}[1]{\left\{ #1 \right\}}
\newcommand{\st}{\medspace | \medspace}
\newcommand{\spcnoindent}{\vspace{1em} \noindent}
\title{Relating Complexity-theoretic Parameters with SAT
  Solver Performance}
\author{
	Edward Zulkoski\inst{1}
	\and
	Ruben Martins\inst{2}
	\and
	Christoph M. Wintersteiger\inst{3}
	\and
	Robert Robere\inst{4}
	\and
	Jia Liang\inst{1}
	\and
	Krzysztof Czarnecki\inst{1}
	\and
	Vijay Ganesh\inst{1}
}
\institute{
	University of Waterloo,
	Waterloo, Ontario, Canada\\
	\email{ezulkosk@uwaterloo.ca}
	\and
	University of Texas at Austin,
	Austin, Texas, U.S.A.\\
	\and
	Microsoft Research,
	Cambridge, U.K.\\
	\and
	University of Toronto,
	Toronto, Ontario, Canada \\
}
\begin{document}

\maketitle

\begin{abstract}
  Over the years complexity theorists have proposed many structural
  parameters to explain the surprising efficiency of conflict-driven
  clause-learning (CDCL) SAT solvers on a wide variety of large
  industrial Boolean instances. While some of these parameters have
  been studied empirically, until now there has not been a unified
  comparative study of their explanatory power on a comprehensive
  benchmark. We correct this state of affairs by conducting a
  large-scale empirical evaluation of CDCL SAT solver performance on
  nearly 7000 industrial and crafted formulas against several
  structural parameters such as backdoors, treewidth, backbones, and
  community structure.

  Our study led us to several results. First, we show that while such
  parameters only weakly correlate with CDCL solving time, certain
  combinations of them yield much better regression models. Second, we
  show how some parameters can be used as a ``lens'' to better
  understand the efficiency of different solving heuristics. Finally,
  we propose a new complexity-theoretic parameter, which we call
  learning-sensitive with restarts (LSR) backdoors, that extends the
  notion of learning-sensitive (LS) backdoors to incorporate restarts
  and discuss algorithms to compute them. We mathematically prove that
  for certain class of instances minimal LSR-backdoors are
  exponentially smaller than minimal-LS backdoors.
\end{abstract}

\section{Introduction}
Modern conflict-driven clause-learning (CDCL) satisfiability (SAT)
solvers routinely solve real-world Boolean instances with millions of
variables and clauses, despite the Boolean satisfiability problem
being NP-complete and widely regarded as intractable in
general.
This has perplexed both theoreticians and solver developers alike over
the last two decades. A commonly proposed explanation is that these
solvers somehow exploit the underlying structure inherent in
industrial instances. Previous work has attempted to identify a
variety of structural parameters, such as
backdoors~\cite{Williams2003,szeider2003fixed,Dilkina2014}, community
structure modularity \cite{Ansotegui2012,newsham2014impact}, and
treewidth~\cite{Mateescu2011}. Additionally, researchers have
undertaken limited studies to correlate the size/quality measures of
these parameters with CDCL SAT solver runtime. For example, Newsham et
al.~\cite{newsham2014impact} showed that there is moderate correlation
between the runtime of CDCL SAT solvers and modularity of community
structure of industrial instances. Others have shown that certain
classes of crafted and industrial instances often have favorable
parameter values, such as small backdoor sizes
\cite{Dilkina2014,Kilby2005,Li2011,Williams2003}.


There are several reason for trying to understand why SAT solvers work as well as they do on industrial instances, and what, if any, structure they exploit. First and foremost is the scientific motivation: a common refrain about heuristics in computer science is that we rarely fully understand why and under what circumstances they work well.
Second, a deeper understanding of the relationship between the
power of SAT heuristics and the structure of instances over which they
work well can lead us to develop better SAT solvers. Finally, we hope
that this work will eventually lead to new parameterized
complexity results relevant to classes of industrial instances.

Before we get into the details of our empirical study, we briefly
discuss several principles that guided us in this work: first, we
wanted our study to be comprehensive both in terms of the parameters
as well as the large size and variety of benchmarks considered.  This
approach enabled us to compare the explanatory strengths of different
parameters in a way that previous studies could not. Also, the large
scale and variety of benchmarks allowed us to draw more general
conclusions than otherwise. Second, we were clear from the outset that
we would be agnostic to the type of correlations (strong or moderate,
positive or negative) obtained, as poor correlations are crucial in
ruling out parameters that are not explanatory. Third, parameter
values should enable us to distinguish between industrial and crafted
instances.

Although many studies of these parameters have been performed in
isolation, a comprehensive comparison has not been performed between
them. A primary reason for this is that most of these parameters are
difficult to compute -- often NP-hard -- and many take longer to
compute than solving the original formula. Further, certain parameters
such as \textit{weak} backdoor size are only applicable to satisfiable
instances \cite{Williams2003}. Hence, such parameters have often been
evaluated on incomparable benchmark sets, making a proper comparison
between them difficult. We correct this issue in our study by focusing
on instances found in previous SAT competitions, specifically from the
Application, Crafted, and Agile tracks \cite{satcomps}. These
instances are used to evaluate state-of-the-art SAT solvers on a
yearly basis. Application instances are derived from a wide variety of
sources and can be considered a small sample of the types of SAT
instances found in practice, such as from verification
domains. Crafted instances mostly contain encodings of
combinatorial/mathematical properties, such as the pigeon-hole
principle or pebbling formulas. While many of these instances are much
smaller than industrial instances, they are often very hard for CDCL
solvers. The Agile track evaluates solvers on bit-blasted
quantifier-free bit-vector instances generated from the whitebox fuzz
tester SAGE~\cite{godefroid2008automated}. In total, we consider
approximately 1200 application instances, 800 crafted instances, and
5000 Agile instances.

\vspace{0.2cm}
\noindent{\bf Contributions.} We make the following four key
contributions in this paper:

\begin{enumerate}
\item We perform a large scale evaluation of several structural
  parameters on approximately 7000 SAT instances obtained from a wide variety
  of benchmark suites, and relate them to CDCL solver performance. We
  show moderate correlations to solving time for certain combinations
  of features, and very high correlations for the Agile benchmark. We
  further show that application and Agile instances have significantly
  better structural parameter values compared to crafted instances. To
  the best of our knowledge, this is first such comprehensive
  study. (Refer Section~\ref{sec_sat_analysis})
  
\item We introduce a new structural parameter, which we call learning-sensitive with restarts (LSR) backdoors, and describe a novel
  algorithm for computing upper bounds on minimum LSR-backdoor sizes, using the concept of
  clause absorption from \cite{atserias2011clause}. The LSR-backdoor
  concept naturally extends the idea of learning-sensitive (LS)
  backdoors introduced by Dilkina et al.~\cite{Dilkina2009a} by taking
  into account restarts, a key feature of CDCL SAT solvers. (Refer
  Background Section~\ref{sec:background} and
  Section~\ref{sec_lsr_algorithms})

\item We show how these structural parameters can be used as a lens to
  compare various solving heuristics, with a current focus on restart
  policies.  For example, we show that the ``Always Restart'' policy
  not only produces the smallest backdoors (with respect to the algorithm we use from Section \ref{sec_lsr_algorithms}), but also has the fastest
  runtime for a class of instances. We hope that our work can be used
  as a guide by theorists and SAT solver developers to rule out the
  study of certain types of parameters (and rule in new types of
  parameters) for understanding the efficiency of CDCL SAT
  solvers. (Refer Section~\ref{sec_sat_analysis})

\item We mathematically prove that minimal LSR-backdoors can be
  exponentially smaller than LS-backdoors for certain formulas. (Refer
  Section~\ref{sec_lsr_theory})
\end{enumerate}

\section{Background}
\label{sec:background}
\textbf{CDCL SAT Solvers.}  We assume basic familiarity with the
satisfiability problem, CDCL solvers and the standard notation used by
solver developers and complexity theorists. For an overview we refer
to \cite{biere2009handbook}. We assume that Boolean formulas are given
in conjunctive normal form (CNF). For a formula $F$, we denote its variables
as \textit{vars(F)}. A \textit{model} refers to a complete satisfying
assignment to a formula.  The \textit{trail} refers to the sequence of
variable assignments, in the order they have been assigned, at any
given point in time during the run of a solver. Learnt clauses are
derived by analyzing the \textit{conflict analysis graph} (CAG), which
represents which decisions and propagations that led to a conflict. We
assume the first unique implication point (1st-UIP) clause learning
scheme throughout this paper, which is the most common in practice
\cite{moskewicz2001chaff}. The learnt clause (aka conflict clause)
defines a cut in the CAG; we denote the subgraph on the side of the
cut which contains the conflict node as the \textit{conflict side},
and the other side as the \textit{reason side}.  For a set of clauses
$\Delta$ and a literal $l$, $\Delta \vdash_1 l$ denotes that unit
propagation can derive $l$ from $\Delta$.

\textbf{Backdoors and Backbones.}  Backdoors are defined with respect
to \textit{subsolvers}, which are algorithms that can solve certain
class of SAT instances in polynomial-time. Example subsolvers
include the unit propagation (UP) algorithm
\cite{Dilkina2014,Williams2003}, which is also what we focus on as
it is a standard
subroutine implemented in CDCL SAT solvers.  Given a partial
assignment $\alpha: B \rightarrow \{0,1\}$, the simplification of $F$
with respect to $\alpha$, denoted $F[\alpha]$, removes all clauses
that are satisfied by $\alpha$, and removes any literals that are
falsified by $\alpha$ in the remaining clauses.  A \textit{strong
  backdoor} of a formula $F$ is a set of variables $B$ such that for
every assignment $\alpha$ to $B$, $F[\alpha]$ can be determined to be
satisfiable or unsatisfiable by the UP subsolver \cite{Williams2003}.
A set of variables $B$ is a \textit{weak backdoor} with respect to a
subsolver $S$ if there exists an assignment $\alpha$ to $B$ such that
the UP subsolver determines the formula to be satisfiable.  Backdoors
were further extended to allow clause-learning to occur while
exploring the search space of the backdoor:

\begin{definition}[Learning-sensitive (LS) backdoor \cite{Dilkina2009}]
	A set of variables $B \subseteq vars(F)$ is an LS-backdoor of
        a formula $F$ with respect to a subsolver $S$ if there exists
        a search-tree exploration order such that a CDCL SAT solver
        without restarts, branching only on $B$ and learning clauses
        at the leaves of the tree with subsolver $S$, either finds a
        model for $F$ or proves that $F$ is unsatisfiable.
\end{definition}

The backbone of a SAT instance is the maximal set of variables such that all variables in the set have the same polarity in every model \cite{monasson1999determining}.
Note that weak backdoors and
backbones are typically only defined over satisfiable
instances. Further, while the backbone of an instance is unique, many
backdoors may exist; we typically try to find the smallest backdoors
possible.

\textbf{Learning-sensitive with Restarts (LSR) Backdoors.}
\label{bg_lsr}
We introduce the concept of an LSR-backdoor here.  Section
\ref{sec_lsr_algorithms} formalizes our approach.

\begin{definition}[Learning-sensitive with restarts (LSR) backdoor]
	A set of variables $B \subseteq vars(F)$ is an LSR-backdoor of
        the formula $F$ with respect to a subsolver $S$ if there
        exists a search-tree exploration order such that a CDCL SAT
        solver (with restarts), branching only on $B$ and learning
        clauses at the leaves of the tree with subsolver $S$, either
        finds a model for $F$ or proves that $F$ is unsatisfiable.
\end{definition}

By allowing restarts, the solver may learn clauses from different
parts of the search-tree of $B$, which would otherwise not be
accessible without restarts. 
In Section
\ref{sec_lsr_algorithms}, we demonstrate an approach to computing upper bounds on minimal LSR-backdoor sizes using algorithms for \textit{clause absorption} from
\cite{atserias2011clause}, which intrinsically relies upon restarts.

\textbf{Graph Parameters.}  We refer to treewidth~\cite{robertson1984graph} and
community structure~\cite{Ansotegui2012} as graph parameters. All graphs parameters
are computed over the \textit{variable incidence graph (VIG)} of a CNF
formula $F$ \cite{Ansotegui2012}. There exists a vertex for every
variable in $F$, and edges between vertices if their corresponding
variables appear in clauses together (weighted according to clause
size).  Intuitively, the community structure of a graph is a partition
of the vertices into communities such that there are more
intra-community edges than inter-community
edges. The \textit{modularity} or \textit{Q} value intuitively denotes
how easily separable the communities of the graph are. The \textit{Q}
value ranges from $[-1/2, 1)$, where values near $1$ means the communities are
highly 
separable and the graph intuitively has a \textit{better} community
structure.  \textit{Treewidth} intuitively measures how much a graph
resembles a tree. Actual trees have treewidth 1. Further details can
be found in \cite{robertson1984graph}.

\section{Related Work}
\label{sec-relatedwork}

\textbf{Backdoor-related Parameters.} Traditional weak and strong
backdoors for both SAT and CSP were introduced by Williams et al.~
\cite{Williams2003}.  Kilby et al. introduced a local search algorithm
for computing weak backdoors~\cite{Kilby2005}, and showed that random
SAT instances have medium sized backdoors (of roughly 50\% of the
variables), and that the size of weak backdoors did not correlate
strongly with solving time. Li et al. introduced an improved Tabu
local search heuristic for weak backdoors \cite{Li2011}. They
demonstrated that many industrial satisfiable instances from SAT
competitions have very small weak backdoors, often around 1\% of the
variables. The size of backdoors with respect to subsolvers different
from UP was considered in \cite{Dilkina2014,Li2011}. Monasson et
al. introduced backbones to study random 3-SAT instances
\cite{monasson1999determining}. Janota et
al. \cite{janota2015algorithms} introduced and empirically evaluated
several algorithms for computing backbones. Several extensions of
traditional strong and weak backdoors have been
proposed. Learning-sensitive (LS) backdoors also consider the
assignment tree of backdoor variables, but additionally allow clause
learning to occur while traversing the tree, which may yield
exponentially smaller backdoors than strong backdoors
\cite{Dilkina2009,Dilkina2009a}.

\textbf{Graph Abstraction Parameters.}  Mateescu computed lower and
upper bounds on the treewidth of large application formulas
\cite{Mateescu2011}.  Ans\'otegui et al. introduced community
structure abstractions of SAT formulas, and demonstrated that
industrial instances tend to have much better structure than other
classes, such as random \cite{Ansotegui2012}. It has also been shown
that community-based features are useful for classifying industrial
instances into subclasses (which distinguish types of industrial
instances in the SAT competition) \cite{jordi2015classification}.
Community-based parameters have also recently been shown to be one of
the best predictors for SAT solving performance
\cite{newsham2014impact}.


\suppress{ \todo{condense if no space} Most of the metrics discussed
  in this paper have associated parameterized complexity results. A
  decision problem and associated parameter $k$ is \textit{fixed
    parameter tractable} (FPT) if for any input $x$, the problem can
  be decided in time $f(k) \cdot |x|^{\mathcal{O}(1)}$, where
  \textit{f} is a computable function dependent only on $k$ (for an
  overview, see \cite{mainfptbook}). In our context, the parameterized
  decision problem is always SAT($\phi$, k), where $\phi$ is the
  Boolean formula and the parameter $k$ may be, for example, a
  backdoor set. Given a formula $\phi$ and weak or strong backdoor
  $k$, the satisfiability of $\phi$ can be determined in time
  $\mathcal{O}(2^k \cdot subsolver(\phi))$ by testing all $2^{|k|}$
  full assignments to the backdoor to determine satisfiability using
  the given subsolver, which by definition runs in polynomial time
  \cite{Williams2003}. Thus the decision problem of SAT with parameter
  weak or strong backdoor is in FPT.  The decision problem of SAT with
  parameter treewidth (of the graph abstraction of the formula) is
  also FPT \cite{szeider2003fixed}. For community structure, the
  notion of h-modularity was introduced, such that a formula with good
  h-modularity can be decomposed into communities that form hitting
  formulas \cite{ganian2015community}. They show that SAT with
  parameter h-modularity is in FPT.  Note importantly that since CDCL
  does not incorporate these FPT algorithms, the FPT upper bounds do
  not necessarily relate to CDCL performance. Our work further
  investigates whether there are empirical relationships between these
  parameters and CDCL runtime.  }


\begin{table}
	
\begin{center}
	\begin{tabular}{| l | m{5.5em} | c | m{19em} |}
		\hline
		Type & Benchmarks & Unsat? & Tool Description \\ \hline
		Weak \cite{Kilby2005,Li2011,Chaharmir2002} & 3SAT, GC, SR & No & Perform Tabu-based  local search to minimize the number of decisions in the final model.  \\ \hline
		LS \cite{Dilkina2009a,Dilkina2014} & LP  & Yes & Run a clause-learning solver, recording all decisions, which constitutes a backdoor.  \\ \hline
		Backbones \cite{janota2015algorithms,Kilby2005,Chaharmir2002} & 3SAT, GC, Comps & No & Repeated SAT calls with UNSAT-core based optimizations.
		\\ \hline
		Treewidth \cite{Liang2015,Mateescu2011} & C09, FM & Yes & Heuristically compute residual graph $G$. The max-clique of $G$ is an upper-bound. \\ \hline
		Modularity \cite{Ansotegui2012,newsham2014impact} & Comps & Yes & The \textit{Louvain method} \cite{blondel2008fast} -- greedily join communities to improve modularity. \\ \hline
	\end{tabular}
\end{center}
\caption{Previously studied benchmarks for each considered parameter, as well as description of tools used to compute them. The ``Unsat?'' column indicates if the parameter is defined on unsatisfiable instances. Abbrevations: 3SAT -- random 3-SAT; GC -- graph coloring; LP -- logistics planning; SR -- SAT Race 2008;
	C09 -- SAT competition 2009; Comps -- 2009-2014 SAT competitions; FM -- feature models. }
\label{tab-relatedwork}
 
	\begin{center}
\begin{tabular}{|l|c|c|c|c|c|c|}
\hline
Benchmark &  Instances &   LSR &  Weak &  Cmty &  Bones &    TW \\
\hline
Application &       1238 &   420 &   306 &   984 &    218 &  1181 \\
Crafted     &        753 &   327 &   195 &   613 &    154 &   753 \\
Random      &        126 &   123 &    76 &   126 &     59 &   126 \\
Agile       &       4968 &  2828 &   464 &  4968 &    208 &  4968 \\ \hline
Total       &       7085 &  3698 &  1041 &  6691 &    639 &  7028 \\
\hline
\end{tabular}
\end{center}
\caption{ The number of instances for which we were able to successfully compute each parameter. ``Cmty'' refers to the community parameters; ``TW'' denotes the treewidth upper bound; ``Bones'' denotes backbone size. }
\label{tab_data_summary}

		\begin{center}
\scalebox{0.93}{
\begin{tabular}{|l|c|c|c|c|}
\hline
\textbf{Feature Set} & \textbf{Application} & \textbf{Crafted} & \textbf{Random} & \textbf{Agile}\\ \hline

$V\oplus{}C\oplus{}C/V$ & 0.03 (1237) & 0.04 (753) & 0.04 (126) & 0.84 (4968)\\

$V\oplus{}C\oplus{}Cmtys\oplus{}Q$ & 0.06 (982) & 0.22 (613) & 0.17 (126) & 0.86 (4968)\\

$V\oplus{}C\oplus{}LSR\oplus{}LSR/V$ & 0.14 (420) & 0.26 (327) & 0.26 (123) & 0.87 (2828)\\

$V\oplus{}C\oplus{}\#Min\_Weak\oplus{}Weak$ & 0.04 (299) & 0.11 (195) & 0.08 (76) & 0.54 (464)\\

$V\oplus{}C\oplus{}Bones\oplus{}Bones/V$ & 0.18 (218) & 0.39 (154) & 0.04 (59) & 0.39 (208)\\

$V\oplus{}C\oplus{}TW\oplus{}TW/V$ & 0.05 (1180) & 0.07 (753) & 0.11 (126) & 0.91 (4968)\\

\hline
$Q\oplus{}C/V\oplus{}LSR/V\oplus{}Q/Cmtys\oplus{}C$ &\textbf{0.29 (420)} & 0.34 (327) & 0.13 (123) & 0.90 (2828)\\

$TW/V\oplus{}Q\oplus{}Cmtys\oplus{}TW\oplus{}LSR/V$ & 0.12 (420) & \textbf{0.57 (327)} & 0.08 (123) & 0.92 (2828)\\

$Q/Cmtys\oplus{}LSR/V\oplus{}C\oplus{}LSR\oplus{}Q$ & 0.22 (420) & 0.35 (327) & \textbf{0.45 (123)} & 0.89 (2828)\\

$Cmtys\oplus{}TW/V\oplus{}C/V\oplus{}TW\oplus{}Q$ & 0.18 (420) & 0.29 (327) & 0.04 (123) & \textbf{0.93 (2828)}\\

\hline
\end{tabular}
}
\end{center}
\caption{ Adjusted R$^2$ values for the given features, compared to log of MapleCOMSPS' solving time. The number in parentheses indicates the number of instances that were considered in each case. The lower section considers heterogeneous sets of features across different parameter types.}
\label{tab-regressions}

	\begin{center}
\begin{tabular}{|l|c|c|c|c|c|}
\hline
Benchmark &        LSR/V &       Weak/V &            Q &      Bones/V &         TW/V \\
\hline
Agile       &  0.18 (0.13) &  0.01 (0.01) &  0.82 (0.07) &  0.17 (0.11) &  0.16 (0.08) \\
Application &  0.35 (0.34) &  0.03 (0.05) &  0.75 (0.19) &  0.64 (0.38) &  0.32 (0.22) \\
Crafted     &  0.58 (0.35) &  0.08 (0.11) &  0.58 (0.24) &  0.39 (0.41) &  0.44 (0.29) \\
Random      &  0.64 (0.32) &  0.11 (0.10) &  0.14 (0.10) &  0.47 (0.40) &  0.82 (0.12) \\
\hline
\end{tabular}
\end{center}
\caption{ Mean (std. dev.) of several parameter values. }
\label{tab-meanstd}

\end{table}


\suppress{ \textbf{Proof Measures.} J{\"a}rvisalo et al. studied how proof
  measures, such as proof length, width, and space, are related to
  CDCL performance \cite{jarvisalo2012relating}. They showed that
  proof size correlated well with solving time, over a set of crafted
  formulas with fixed width and length.  }

Other work such as SatZilla \cite{xu2008satzilla} focus on large sets
of easy-to-compute parameters that can be used to quickly predict the
runtime of SAT solvers. In this paper, our focus is on parameters
that, if sufficiently favorable, offer provable parameterized
complexity-theoretic guarantees of worst-case runtime
\cite{downey2013fundamentals}. The study of structural parameters of
SAT instances was inspired by the work on clause-variable ratio and
the phase transition phenomenon observed for randomly-generated SAT
instances in the late
1990's~\cite{coarfa2000random,monasson1999determining,selman1996generating}.
 


Table \ref{tab-relatedwork} lists previous results on empirically
computing several parameters and correlating them with SAT solving
time.  While weak backdoors, backbones, and treewidth have been
evaluated on some industrial instances from the SAT competitions, only
modularity has been evaluated across a wide range of instances. Some
benchmarks, such as the random 3-SAT and graph coloring instances
considered in \cite{Kilby2005}, are too small/easy for modern day CDCL
solvers to perform meaningful analysis. Additionally, the benchmarks
used in previous works to evaluate each parameter are mostly disjoint,
making comparisons across the data difficult.

\section{Analysis of Structural SAT Parameters}
\label{sec_sat_analysis} 

Our first set of experiments investigate the relationship between
structural parameters and CDCL performance. While we would like to
evaluate all parameters considered in Section \ref{sec-relatedwork},
we focus on weak backdoors, backbones, community structure, treewidth,
and LSR-backdoors. We note that obtaining any non-trivial upper bound
on the size of the strong backdoor seems infeasible at this time.

\suppress{
\begin{table}[t]
	\centering
	\begin{tabular}{|l|rrrrrr|}
		\hline
		{} &  \#Instances &   LSR &  Weak &  Cmty  & Backbones &  maplecomsps \\ \hline
		agile    &       5000 &  1829 &   377 &  5000 &  &      2543   \\
		application &       1048 &   482 &   257 &   888 & &         1045  \\
		crafted  &        783 &   300 &     0 &   726 &  &           783 \\
		random   &        126 &   106 &     0 &   126 & &          126 \\ 
		Total    &       6957 &  2717 &   634 &  6740 &  &        4497 \\ \hline
	\end{tabular}
	\caption{\todo{column for sat}Depicts the number of instances
          in each benchmark, as well as the number of instances we
          were able to successfully compute each metric/time. ``Cmty''
          refers community metrics.}
	\label{tab-datasummary}
\end{table}
}

\suppress{
\begin{table}
	\centering
	\begin{tabular}{|l|ccccc|}
		\hline
		{} &   LSR &  Weak+keys &  Cmty  & Backbones & maplecomsps \\ \hline
		agile    &  (6GB, 300s) &   (6GB, 1h) &  (6GB, 1h) & (6GB, 24h) & (6GB, 60s)   \\
		app/crafted/random &  (6GB, 3h) &   (6GB, 24h) &  (6GB, 2h) & (6GB, 24h) & (6GB, 5000s)   \\ \hline
	\end{tabular}
	\caption{\todo{fold into text not worth the space}Allocated resources for each experiment.}
	\label{tab-resources}
\end{table}
}

\textbf{Experimental Setup, Tools, and Benchmarks.}  We use off the
shelf tools to compute weak backdoors \cite{Li2011}, community
structure and modularity \cite{newsham2014impact}, backbones
\cite{janota2015algorithms}, and treewidth \cite{Mateescu2011}. Their
approaches are briefly described in Table \ref{tab-relatedwork}. Due
to the difficulty of exactly computing these parameters, with the
exception of backbones, the algorithms used in previous work (and our
experiments) do not find optimal solutions, e.g., the output may be an
upper-bound on the size of the minimum backdoor. We compute
LSR-backdoors using a tool we developed called \textit{LaSeR}, which
computes an upper-bound on the size of the minimal LSR-backdoor. The
tool is built on top of the MapleSat SAT solver
\cite{liang2016learning},  an extension of MiniSat
\cite{een2003extensible}.  We describe the \textit{LaSeR} algorithm in
Section \ref{sec_lsr_algorithms}. We use \textit{MapleCOMSPS}, the
2016 SAT competition main track winner as our reference solver for
solver runtime.

Table \ref{tab_data_summary} shows the data sources for our
experiments. We include all instances from the Application and Crafted
tracks of the SAT competitions from 2009 to 2014, as well as the 2016
Agile track. We additionally included a small set of random instances
as a baseline. As the random instances from recent SAT competitions
are too difficult for CDCL solvers, we include a set of instances from
older competitions. We pre-selected all random instances from the 2007
and 2009 SAT competitions that could be solved in under 5 minutes by
MapleCOMSPS. All instances were simplified using MapleCOMSPS'
preprocessor before computing the parameters. The preprocessing time
was not included in solving time.
 
Experiments were run on an Azure cluster, where each node contained
two 3.1 GHz processors and 14 GB of RAM. Each experiment was limited
to 6 GB. For the Application, Crafted, and Random instances, we
allotted 5000 seconds for MapleCOMSPS solving (the same as used in the
SAT competition), 24 hours for backbone and weak backdoor computation,
2 hours for community structure computation, and 3 hours for LSR
computation. For the Agile instances, we allowed 60 seconds for
MapleCOMSPS solving and 300 seconds for LSR computation; the remaining
Agile parameter computations had the same cutoff as Application. Due
to the difficulty of computing these parameters, we do not obtain
values for all instances due to time or memory limits being
exceeded.  Several longer running experiments were run on the SHARCNET
Orca cluster \cite{sharcnet}. Nodes contain cores between 2.2GHz and
2.7GHz.

\textbf{Structural Parameters and Solver Runtime Correlation:} The
first research question we posed is the following: \textit{Do
  parameter values correlate with solving time? In particular, can we
  build significantly stronger regression models by incorporating
  combinations of these features?} To address this, we
construct ridge regression models from subsets of features related to
these parameters. We used ridge regression, as opposed to linear
regression, in order to penalize multi-collinear features in the
data. We consider the following ``base'' features: number of variables
(V), number of clauses (C), number of communities (Cmtys), modularity
(Q), weak backdoor size (Weak), the number of minimal weak backdoors
computed (\#Min\_Weak), LSR-backdoor size (LSR), treewidth upper-bound
(TW), and backbone size (Bones). For each $P \in \{$\textit{C, Cmtys,
  Weak, LSR, TW, Bones}$\}$ we include its ratio with respect to $V$
as $P/V$. We also include the ratio feature $Q/Cmtys$, as used in
\cite{newsham2014impact}. All features are normalized to have mean 0
and standard deviation 1. For a given subset of these features under
consideration, we use the ``$\oplus$'' symbol to indicate that our
regression model contains these base features, as well as all
higher-order combinations of them (combined by multiplication). For
example, $V \oplus C$ contains four features: $V$, $C$, and $V \cdot
C$, plus one ``intercept'' feature. Our dependent variable is the log
of runtime of the MapleCOMSPS solver.

In Table \ref{tab-regressions}, we first consider sets of
\textit{homogeneous} features with respect to a given parameter, e.g.,
only weak backdoor features, or only community structure based
features, along with $V$ and $C$ as baseline features. Each cell
reports the adjusted $R^2$ value of the regression, as well as the
number of instances considered in each case (which corresponds to the
number of instances for which we have data for each feature in the
regression). It is important to note that since different subsets of SAT
formulas are used for each regression (since our dataset is
incomplete), we should not directly compare the cells in the top
section of the table. Nonetheless, the results do give some indication
if each parameter relates to solving time.

In order to show that combinations of these features can produce
stronger regression models, in the bottom half of Table
\ref{tab-regressions}, we consider all instances for which we have
LSR, treewidth, and community structure data. We exclude backbones and
weak backdoors in this case, as it limits our study to SAT instances
and greatly reduces the number of datapoints.  We considered all
subsets of base features of size 5 (e.g. $V \oplus LSR/V \oplus LSR
\oplus Q \oplus TW$), and report the best model for each benchmark,
according to adjusted $R^2$ (i.e. the bolded data along the
diagonal). This results in notably stronger correlations than with any
of the homogeneous features sets. Although we report our results with
five base features (whereas most homogeneous models only used four),
similar results appear if we only use four base features.  We also
note that $R^2$ values results to be higher for the Agile instances,
as compared to application and crafted instances. This is somewhat
expected, as the set of instances are all derived from the SAGE
whitebox fuzzer~\cite{godefroid2008automated}, as compared to our other benchmarks
which come from a heterogeneous set of sources.

For each row corresponding to the heterogeneous feature sets, the base
features are ordered according to the \textit{confidence level}
(corresponding to p-values), that the feature is significant to the
regression (highest first), with respect to the model used to produce
the bold data along the diagonal (i.e. the best model for each
benchmark). Confidence values are measured as a percentage; for
brevity, we consider values over 99\% as very significant, values
between 99\% and 95\% are significant, and values below 95\% are
insignificant. For application instances, \textit{Q}, \textit{C/V},
and \textit{LSR/V} are all very significant, \textit{Q/Cmtys} was
significant, and $C$ was not significant.  For crafted instances,
\textit{TW/V} and \textit{Q} were very significant, but the other base
features were insignficant.  No features are significant for the
random track, indicating that the $R^2$ value is likely spurious,
partly due to the small size of the benchmark. For the Agile
benchmark, all five features are very significant. In each model,
several higher-order features are also reported as significant,
including several where the base feature is not considered
significant.

We also remark that previous work showed notably higher $R^2$ values
for community-based features \cite{newsham2014impact}. There are
several significant differences between our experiments. First, our
instances are pre-simplified before computing community
structure. Their experiments grouped all Application, Crafted, and
Random into a single benchmark, whereas ours are split.



\suppress{
\begin{table}[t]
	\centering
	\begin{tabular}{|l|rrr|}
		\hline
		benchmark &       lvr &      wvr &         Q \\ \hline
		agile     &  0.224 (0.135) &  0.004 (0.002) &  0.870 (0.054) \\
		app       &  0.320 (0.319) &  0.015 (0.024) &  0.789 (0.183) \\
		crafted   &  0.573 (0.268) &      N/A &  0.630 (0.250)\\
		random    &  0.476 (0.152) &      N/A &  0.147 (0.104)\\
		\hline
	\end{tabular}
	\caption{Mean (std. dev.) values for each parameter.}
	\label{tab-meanstd}
\end{table}
}

\textbf{Structural Parameters for Industrial vs. Crafted Instances:}
The research question we posed here is the following: \textit{Do
  real-world SAT instances have significantly more favorable parameter
  values (e.g. smaller backdoors or higher modularity), when compared
  to crafted or random instances?}  A positive result, such that
instances from application domains (including Agile) have better
structural values, would support the hypothesis that such structure
may relate to the efficiency of CDCL solvers. Table \ref{tab-meanstd}
summarizes our results.  We note that while application and Agile
instances indeed appear more structured with respect to these
parameters, the application benchmark has high standard deviation
values. This could be due to the application instances
coming from a wide variety of sources.

\subsection{Using Structural Parameters to Compare Solving Heuristics}
When comparing different solvers or heuristics, the most common
approach is to run them on a benchmark to compare solving times, or in
some cases the number of conflicts during solving. However, such an
approach does not lend much insight as to \textit{why} one heuristic
is better than another, nor does it suggest any ways in which a less
performant heuristic may be improved. By contrast, in their recent
works Liang et al.~\cite{liang2016learning,liang2015understanding},
drew comparisons between various branching heuristics by comparing
their locality with respect to the community structure or the
``learning rate'' of the solver. This eventually led them to build
much better branching heuristics. We hope to do the same by using
LSR-backdoors as a lens to compare restart policies.

Starting from the baseline MapleSAT solver, we consider three restart
heuristics: 1) the default heuristic based on the Luby sequence
\cite{luby1993optimal}; 2) restarting after every conflict (always restart); and 3)
never restarting.  We test the following properties of these
heuristics, which may relate to their effect on solving performance.
First, we consider the LSR-backdoor size, as computed by the approach discussed
in Section \ref{sec_lsr_algorithms}. A run of the solver that focuses
on fewer variables can be seen as being more ``local,'' which may be
favorable in terms of runtime. When computing LSR-backdoor sizes, each learnt
clause $C$ is annotated with a set of \textit{dependency variables}
(denoted $D_C^*$ in Section \ref{sec_lsr_algorithms}), which
intuitively is a sufficient set of variables such that a fresh solver,
branching only on variables in this set, can learn $C$. Our second
measure looks at the average dependency set size for each learnt
clause.

\begin{table}[t]
\centering
\begin{center}
	\begin{tabular}{ |l|c|c|c| }
		\hline
		\textbf{Property} & \textbf{Luby} & \textbf{Always Restart} & \textbf{Never Restart} \\ \hline
		LSR Size& 0.16 (0.11) & \textbf{0.13 (0.08)} & 0.25 (0.17)\\
		Avg. Clause LSR& 0.11 (0.08) & \textbf{0.05 (0.05)} & 0.19 (0.14)\\
		Num Conflicts& 133246 (206441) & \textbf{50470 (84606)} & 256046 (347899)\\
		Solving Time (s)& 8.59 (14.26) & \textbf{6.09 (11.01)} & 18.09 (24.86)\\
		\hline
	\end{tabular}
\end{center}

\caption{Comparison of LSR measures and solving time for various restart policies on the Agile benchmark. LSR sizes are normalized by the number of variables.}
\label{tab_lens}

\end{table}

Due to space limitations, we only consider the Agile instances as this
gives us a large benchmark where all instances are from the same
domain.  The data in Table \ref{tab_lens} corresponds to the average
and standard deviation values across the benchmark. We only consider
instances where we can compute all data for each heuristic, in total
2145 instances. The always restart policies emits smaller LSR sizes,
both overall and on average per clause, and, somewhat surprisingly,
always restarting outperforms the more standard Luby policy in this
context.  Note that we do not expect this result to hold across all
benchmarks, as it has been shown that different restart
policies are favorable for different types of instances
\cite{biere2015evaluating}.  However, given the success of always
restarting here, results such as ours may promote techniques to
improve rapidly restarting solvers such as in~\cite{ramos2011between}.

\section{Computing LSR-Backdoors}
\label{sec_lsr_algorithms}

Dilkina et al.~\cite{Dilkina2009,Dilkina2014} incorporated clause
learning into the concept of strong backdoors 
by introducing LS-backdoors, and additionally described an approach for
empirically computing upper bounds on minimal LS-backdoors.
We refer the reader to our [anonymized] extended technical report for complete proofs \cite{extended_paper_link}.

We propose a new concept called learning-sensitive with restarts (LSR)
backdoors and an approach that takes advantage of allowing restarts
which can often greatly reduce the number of decisions necessary to
construct such a backdoor especially if many ``unnecessary'' clauses
are derived during solving. Our key insight is that, as stated in
\cite{ansotegui2015using,oh2015between}, most learnt clauses are 
ultimately not used to determine SAT or UNSAT, and therefore we only need
to consider variables required to derive such ``useful'' clauses. Our
result shows that, for an unsatisfiable formula, the set of variables
within the set of learnt clauses in the UNSAT proof constitutes an
LSR-backdoor. The result for satisfiable formulas shows that the set
of decision variables in the final trail of the solver, along with the
variables in certain learnt clauses, constitute an
LSR-backdoor. Before describing result, we first recall the properties
of \textit{absorption}, \textit{1-empowerment}, and
\textit{1-provability}, which were initially used to demonstrate that
CDCL can simulate general resolution within some polynomial-size
bound:

\begin{definition}[Absorption \cite{atserias2011clause}]
  Let $\Delta$ be a set of clauses, let $C$ be a non-empty clause and
  let $x^\alpha$ be a literal in $C$. Then $\Delta$ absorbs $C$ at
  $x^\alpha$ if every non-conflicting state of the solver that
  falsifies $C \setminus \{x^\alpha\}$ assigns $x$ to $\alpha$. If
  $\Delta$ absorbs $C$ at every literal, then $\Delta$ absorbs $C$.
\end{definition}
	
The intuition behind absorbed clauses is that adding an already
absorbed clause $C$ to $\Delta$ is in some sense redundant, since any
unit propagation that could have been realized with $C$ is already
realized by clauses in $\Delta$.
	
\begin{definition}[1-Empowerment \cite{pipatsrisawat2008new}]
  Let $\alpha \Rightarrow l$ be a clause where $l$ is some literal in
  the clause and $\alpha$ is a conjunction of literals. The clause is
  1-empowering with respect to a set of clauses $\Delta$ if:
  \begin{enumerate}
  \item $\Delta \models (\alpha \Rightarrow l)$: the clause is implied
    by $\Delta$.
  \item $\Delta \wedge \alpha$ does not result in a conflict
    detectable by unit propagation.
  \item $\Delta \wedge \alpha \not \vdash_{1} l$: unit propagation
    cannot derive $l$ after asserting the literals in $\alpha$.
  \end{enumerate} 
\end{definition}
	
\begin{definition}[1-Provability \cite{pipatsrisawat2009power}]
  Given a set of clauses $\Delta$, a clause $C$ is 1-provable with
  respect to $\Delta$ \textit{iff} $\Delta \wedge \neg C \vdash_{1}
  false$.
\end{definition}
		
An important note is that every learnt clause is both 1-empowering and
1-provable, and therefore not absorbed, at the moment it is derived by
a CDCL solver (i.e., before being added to $\Delta$)
\cite{pipatsrisawat2008new,pipatsrisawat2009power}.

\begin{lemma}
  \label{absorb-lemma}		 
  Let $\Delta$ be a set of clauses and suppose that C is a
  1-empowering and 1-provable clause with respect to $\Delta$. Then
  there exists a sequence $\sigma$ of decisions and restarts
  containing only variables from C such that $\Delta$ and the set of
  learned clauses obtained from applying $\sigma$ absorbs C.
\end{lemma}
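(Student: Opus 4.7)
First, I would fix a single literal $l_i \in C$ and show how, using only decisions on variables of $C$ together with restarts, one can augment $\Delta$ with learned clauses until it absorbs $C$ at $l_i$; repeating this argument for each of the $|C|$ literals then yields absorption of $C$ itself. Only decisions on variables of $C$ are made in the construction, so the final $\sigma$ automatically satisfies the lemma's variable restriction.

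The body of the loop is: restart the solver (emptying the trail), then for each $j \neq i$ decide the literal $\neg l_j$ in some order and apply unit propagation. If the resulting state is conflicting, or if $l_i$ has already been propagated into the trail, then by monotonicity of unit propagation every non-conflicting extension falsifying $C \setminus \{l_i\}$ also propagates $l_i$, and $C$ is absorbed at $l_i$; the loop exits. Otherwise the state is non-conflicting and leaves $x_i$ unassigned, so I additionally decide $\neg l_i$, making the entire trail falsify $C$. Because $C$ is $1$-provable with respect to $\Delta$ (and this property is preserved as further clauses are added), unit propagation must now derive a conflict. Standard $1$-UIP conflict analysis yields a new clause $C'$, which I add to $\Delta$ before iterating.

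The heart of the proof is showing termination. Each learned $C'$ is simultaneously $1$-empowering and $1$-provable with respect to the clause database that produced it, by the standard property of $1$-UIP clauses noted just before the lemma. Concretely, the UIP literal of $C'$ was not derivable by unit propagation from the trail prefix preceding the last decision of $\neg l_i$, but becomes derivable once $C'$ is added. I would use this to define a potential function: the number of pairs $(\beta, l)$ where $\beta$ is a partial assignment and $\Delta \vdash_1 l$ given $\beta$. The $1$-empowerment condition guarantees this potential strictly increases with every added learned clause, and since there are only finitely many such pairs over a finite variable set, the loop must halt. When it does, it can only be via the early-exit test, at which point $\Delta$ absorbs $C$ at $l_i$.

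The main obstacle is making the termination argument airtight: I have to verify that the $1$-empowerment of $C'$ actually contributes to a strictly monotone and globally bounded potential, and that $1$-provability of $C$ is preserved under enlarging $\Delta$ (which follows because adding clauses only adds unit-propagation consequences, so $\Delta \wedge \neg C \vdash_1 \mathrm{false}$ remains valid). A secondary subtlety I would handle is that unit propagation and the UIP literal may touch variables outside of $C$; since the lemma only constrains the decision variables, this is harmless, but it has to be stated explicitly so that the variable-restriction in $\sigma$ is justified.
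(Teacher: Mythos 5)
Your proposal is correct and is essentially the construction the paper relies on: the paper's own ``proof'' is a one-line deferral to Proposition~2 of Pipatsrisawat and Darwiche (the \texttt{pipatsrisawat2009power} citation), whose argument is exactly your loop of restart, decide the negations of $C\setminus\{l_i\}$, use 1-provability to force a conflict, learn a 1-empowering asserting clause, and bound the iterations by the monotone, finitely bounded count of pairs $(\beta,l)$ with $\Delta\wedge\beta\vdash_1 l$. You have simply written out in full the cited argument that the paper leaves implicit, including the correct handling of the variable restriction and the preservation of 1-provability under adding clauses.
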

\begin{proof} 
  The proof follows directly from the construction of such a decision
  sequence in the proof of Proposition 2 of
  \cite{pipatsrisawat2009power}.
\end{proof}

Our result additionally makes use of the following notation.  Let $F$
be a formula and $S$ be a CDCL solver. We denote the full set of
learnt clauses derived during solving as $S_L$. For every
conflicting state, let $C^\prime$ denote the clause that will be
learned through conflict analysis.  We let $R_{C^\prime}$ be the set
of clauses on the conflict side of the implication graph used to
derive $C^\prime$ where $R_{C^\prime}^* = R_{C^\prime} \cup \bigcup_{C
  \in R_{C^\prime}} R_C^*$ recursively defines the set of clauses
needed to derive $C^\prime$ (where $R_{original\_clause}^* =
\emptyset$). For every learnt clause we define $D_{C^\prime}^* =
vars(C^\prime) \cup \bigcup_{C \in R_{C^\prime}^*} D_C^*$, where
$D_{original\_clause}^* = \emptyset$, as the set of variables in the
clause itself as well as any learnt clause used in the derivation of
the clause (recursively). Intuitively, $D_{C^\prime}^*$ is a
sufficient set of \textit{dependency variables}, such that a fresh SAT
solver can absorb $C^\prime$ by only branching on variables in the
set. For a set of clauses $\Delta$, we let $R_\Delta^* = \bigcup_{C\in
  \Delta} R_C^*$ and $D_\Delta^* = \bigcup_{C \in \Delta} D_C^*$.

\begin{lemma}
Let $S$ be a CDCL solver used to determine the satisfiability of some
formula $F$. Let $\Delta \subseteq S_L$ be a set of clauses learned while solving $F$. Then a fresh solver $S^\prime$ can absorb
all clauses in $\Delta$ by only branching on the variables in
$D_{\Delta}^*$.
\label{lem_absorb_clauses}
\end{lemma}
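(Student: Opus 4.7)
The plan is to prove this by induction on the derivation order of the clauses in $\Delta$. More precisely, we order the clauses $C_1, C_2, \ldots, C_k$ of $\Delta$ according to the order in which the original solver $S$ derived them, and then simulate, step by step, a strategy for $S^\prime$ that absorbs each $C_i$ in turn, restarting between stages. The induction hypothesis will state that after stage $i$, the solver $S^\prime$ has absorbed $C_1, \ldots, C_i$, and that every variable it has branched on so far lies in $\bigcup_{j \le i} D_{C_j}^{*} \subseteq D_{\Delta}^{*}$.

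For the inductive step, suppose we have already absorbed $C_1, \ldots, C_{i-1}$ and we want to absorb $C_i$. The key observation is that $C_i$, when derived by $S$, was 1-empowering and 1-provable with respect to the original clauses of $F$ together with the learnt clauses in $R_{C_i}^{*}$ used in its derivation (this is the general fact that every 1-UIP learnt clause is 1-empowering and 1-provable, as noted just before Lemma~\ref{absorb-lemma}). Now, the learnt clauses appearing in $R_{C_i}^{*}$ are exactly (a subset of) those already considered in the recursive definition of $D_{C_i}^{*}$, and by the dependency inclusion $D_{C}^{*} \subseteq D_{C_i}^{*}$ for each such $C$, they all appear at some earlier stage $j < i$ in our ordering. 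Hence by the induction hypothesis each such clause has been absorbed by $S^\prime$. Since absorption of a clause makes its unit-propagation behaviour redundant (any unit propagation that the clause would enable is already realised by the absorbing set), 1-provability of $C_i$ is preserved when we replace the learnt clauses in $R_{C_i}^{*}$ by the current clause database of $S^\prime$; 1-empowerment is likewise preserved, since it only depends on unit-propagation behaviour and on $\Delta \models (\alpha \Rightarrow l)$, which still holds because absorbed clauses are logical consequences.

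We then invoke Lemma~\ref{absorb-lemma} on $C_i$ with respect to the current clause database of $S^\prime$: this yields a sequence of decisions and restarts, using only variables from $C_i$, that extends the database to absorb $C_i$. Since $\mathit{vars}(C_i) \subseteq D_{C_i}^{*} \subseteq D_{\Delta}^{*}$, the invariant on branching variables is maintained, and the inductive step is complete. After stage $k$ we have absorbed every clause in $\Delta$ while branching only on variables in $D_{\Delta}^{*}$, proving the lemma.

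The main obstacle is the careful bookkeeping in the inductive step: one must argue that ``simulated absorption'' of prior clauses is a faithful substitute for literally adding them to the database, i.e.\ that 1-empowerment and 1-provability of $C_i$ with respect to $F \cup R_{C_i}^{*}$ transfer to 1-empowerment and 1-provability with respect to $F \cup (\text{clauses absorbed by $S^\prime$ so far})$. This is where the precise definition of absorption matters, and it is the only non-mechanical part of the argument; once it is established, the conclusion follows by straightforward induction on the derivation order combined with Lemma~\ref{absorb-lemma}.
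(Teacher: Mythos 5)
Your overall strategy---order the learnt clauses by derivation time, argue each is 1-provable once its ancestors are absorbed, and invoke Lemma~\ref{absorb-lemma}---is the same as the paper's. But there is a genuine gap in how you set up the induction. You induct over the clauses of $\Delta$ only, and then claim that the learnt clauses in $R_{C_i}^{*}$ ``all appear at some earlier stage $j<i$ in our ordering,'' so that the induction hypothesis guarantees they are absorbed. That does not follow: $\Delta$ is an arbitrary subset of $S_L$ (e.g.\ $L_P$ in Theorem~\ref{thm_lsr_sat}), and the clauses used to derive $C_i$ need not belong to $\Delta$ at all. The inclusion $D_C^{*}\subseteq D_{C_i}^{*}$ you cite is a statement about \emph{variables}, not about membership of the ancestor \emph{clauses} in your ordering. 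Without those ancestors being absorbed, the 1-provability of $C_i$ with respect to $S'$'s database is unsupported. The fix is exactly what the paper does: run the induction over $seq(R_\Delta^{*})$ (the recursive dependency closure, together with $\Delta$) in derivation order. The variable budget still works because $D_\Delta^{*}$ is defined recursively to contain the variables of every clause in that closure, so each application of Lemma~\ref{absorb-lemma} still branches only inside $D_\Delta^{*}$.

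A second, smaller error: you assert that 1-empowerment of $C_i$ is ``preserved'' when moving to the fresh solver's (larger) clause database. It is not---adding clauses strengthens unit propagation and can only destroy 1-empowerment (conditions (2) and (3) of the definition can fail). This is harmless but must be handled as a case split, as the paper does: if $C_i$ is no longer 1-empowering with respect to the current database $\Theta_{i-1}$, then it is already absorbed and there is nothing to do; otherwise it is 1-empowering and 1-provable, and Lemma~\ref{absorb-lemma} applies. As written, your justification for the hypothesis of Lemma~\ref{absorb-lemma} is wrong even though the conclusion you need is recoverable.
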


\begin{sketch}
  Let $seq(R_\Delta^*)=\langle C_1, C_2, \ldots C_n \rangle$ be the
  sequence over $R_\Delta^*$ in the order that the original solver $S$
  derived the clauses, and suppose we have already absorbed the first
  $k-1$ clauses by only branching upon $D_{\{C_1, \ldots, C_{k-1}
    \}}^*$. Then, in particular, the clauses in $R_{C_k}$ have been
  absorbed, so $C_k$ must be 1-provable. If $C_k$ is not 1-empowering,
  then it is absorbed and we are done. If $C_k$ is 1-empowering, we
  can invoke Lemma \ref{lem_absorb_clauses} to absorb $C_k$ by only
  branching on variables in $C_k$, and $vars(C_k) \subseteq
  D_\Delta^*$ by construction.
\end{sketch}

\begin{theorem}[LSR Computation, SAT case]
  \label{thm_lsr_sat}
  Let $S$ be a CDCL solver, $F$ be a satisfiable formula, and $T$ be
  the final trail of the solver immediately before returning SAT,
  which is composed of a sequence of decision variables $T_D$ and
  propagated variables $T_P$. For each $p\in T_P$, let the clause used
  to unit propagate $p$ be $l_p$ and the full set of such clauses be
  $L_P$.  Then $B = T_D \cup D_{L_p}^*$ constitutes an LSR-backdoor
  for $F$.
\end{theorem}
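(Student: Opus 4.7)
The plan is to exhibit a two-phase execution of a CDCL solver with restarts that makes decisions only from $B$ and terminates by returning \textsc{sat}. In the first phase the solver absorbs exactly the clauses needed to replicate the propagations on the final trail, and in the second phase it replays the decisions of $T_D$, relying on absorption to reproduce every propagation in $T_P$.

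For the first phase, partition $L_P$ into original clauses and learnt clauses. Original clauses are present in the fresh solver from the outset and therefore already trigger the same unit propagations whenever their non-pivot literals are falsified; no branching is required for them. For the learnt clauses in $L_P$, apply Lemma~\ref{lem_absorb_clauses} to obtain a sequence of decisions and restarts that absorbs all of them while branching only on $D_{L_P}^{*} \subseteq B$. This finishes phase one: after the final restart the fresh solver's clause database contains enough clauses that, jointly with the original formula, every clause in $L_P$ is absorbed at the literal it propagated in the original run.

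In the second phase, perform one more restart and then branch, in order, on the variables of $T_D$ assigning each the same polarity it had in $T$. I would then argue by induction on the position in the trail that the reconstructed trail contains, at every step, a prefix of the original trail $T$. At a decision point this is immediate, as we choose the same literal as in $T$. At a propagation step for some $p \in T_P$, the induction hypothesis ensures that every literal in $l_p \setminus \{p\}$ is already falsified and that the current state is non-conflicting (the original trail was non-conflicting all the way to \textsc{sat}). If $l_p$ is an original clause, unit propagation forces $p$ directly; if $l_p$ is a learnt clause, absorption at the literal $p$ forces $p$ as well. Thus every $p \in T_P$ reappears in the correct order, the reconstructed trail equals $T$ (which is a model of $F$), and the solver returns \textsc{sat}. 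Since all decisions made in both phases lie in $D_{L_P}^{*} \cup T_D = B$, this witnesses $B$ as an LSR-backdoor.

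The main obstacle is the inductive step in phase two: absorption is a state-dependent property (it holds in every non-conflicting state that falsifies the remaining literals), so I need to verify that the replayed state is non-conflicting at each step and matches the falsification condition of the original run. The cleanest way is to show by a single induction that the phase-two trail after the $k$-th event coincides exactly with the length-$k$ prefix of $T$; non-conflictness then transfers from the original run, and the falsification condition for each $l_p$ holds because the literals of $l_p \setminus \{p\}$ were already present (and with the correct polarities) in that same prefix of $T$.
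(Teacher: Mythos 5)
Your proposal follows essentially the same route as the paper's proof: absorb the learnt clauses of $L_P$ via Lemma~\ref{lem_absorb_clauses} by branching only on $D_{L_P}^*$, restart, and then replay the decisions of $T_D$ in the original order and polarity, letting absorption reproduce the propagations. One refinement is needed in your phase-two induction: the hypothesis that the replayed trail ``coincides exactly'' with the length-$k$ prefix of $T$ is too strong and would fail, because the fresh solver's clause database now contains extra clauses learned during the absorption phase, which can trigger additional propagations and may even pre-assign some variables of $T_D$ before they are reached as decisions. The correct invariant is the weaker one you also state --- the replayed trail \emph{contains} the corresponding prefix of $T$ and, moreover, every assigned literal agrees with the model $M$ found by $S$ (since all clauses in the database are implied by $F$ and $M\models F$, any unit propagation from a sub-assignment of $M$ can only derive literals of $M$). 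This subset-of-$M$ invariant is what supplies non-conflictingness at each step, rather than transferring it from the original run; it is exactly the point the paper's own proof addresses when it notes that a variable of $T_D$ may already be assigned and that the wrong polarity can never be implied because that would block the model found by $S$.
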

\begin{sketch}
  Using Lemma \ref{lem_absorb_clauses}, we first absorb all clauses in
  $L_P$ by branching on $D_{L_P}^*$.  We can then restart the solver
  to clear the trail, and branch on the variables in $T_D$, using the
  same order and polarity as the final trail of $S$. Since we have
  absorbed each $l_p$, every $p$ will be propagated.
\end{sketch}
	
	
\begin{theorem}[LSR Computation, UNSAT case]
  \label{thm_lsr_unsat}
  Let $S$ be a CDCL solver, $F$ be an unsatisfiable formula, and
  $\Delta \subseteq S_L$ be the set of learnt clauses used to derive
  the final conflict.  Then $D_{\Delta}^*$ constitutes an LSR-backdoor
  for $F$.
\end{theorem}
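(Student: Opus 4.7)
The plan is to mirror the SAT-case argument of Theorem~\ref{thm_lsr_sat}, exploiting that the ``final conflict'' in an UNSAT run of a CDCL solver occurs at decision level $0$: that is, $F$ together with $\Delta$ derives $\bot$ purely by unit propagation. So if a fresh solver $S'$ can simulate that propagation after branching only on variables in $D_\Delta^*$, the theorem follows.

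First I would apply Lemma~\ref{lem_absorb_clauses} to $\Delta$, producing a sequence of decisions and restarts over variables in $D_\Delta^*$ after which the original formula together with the clauses learnt by $S'$ absorbs every clause in $\Delta$. A further restart clears the trail so that $S'$ begins propagation from the empty assignment against a clause database that absorbs all of $\Delta$. The claim is then that $S'$ reaches a conflict without making any further decisions: absorption preserves unit-propagation power, so in any state that falsifies all but one literal of an absorbed clause $C$, the absorbing set already propagates the remaining literal. Iterating this along the unit-propagation chain used by the original solver $S$ to reach its decision-level-$0$ conflict shows that $S'$ replays the same chain and returns UNSAT. Since every decision used lies in $D_\Delta^*$, this exhibits $D_\Delta^*$ as an LSR-backdoor.

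The main obstacle I expect is the bookkeeping in the last step: absorption is defined per clause and per literal, but the final conflict may arrive through a long chain of propagations whose intermediate propagators are themselves clauses of $\Delta$ rather than of the absorbing set. The cleanest way around this is a short induction on the length of that chain, using the absorption property at each step to swap a propagation by a clause of $\Delta$ for the corresponding propagation by the absorbing clauses of $S'$. Once this induction is in place, the rest is a direct extension of the SAT-case sketch, and neither the 1-empowerment nor the 1-provability hypotheses needed in Lemma~\ref{lem_absorb_clauses} have to be reverified here because they were already discharged when absorbing $\Delta$.
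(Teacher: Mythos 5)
Your proposal follows the same route as the paper's own (two-sentence) proof: invoke Lemma~\ref{lem_absorb_clauses} to absorb all of $\Delta$ by branching only on $D_\Delta^*$, then observe that the level-$0$ propagation to the final conflict is reproduced because absorption preserves unit-propagation power. The induction along the propagation chain that you flag as the ``main obstacle'' is exactly the detail the paper leaves implicit, and your handling of it is correct.
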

\begin{proof}
  The result follows similarly to the satisfiable
  case. We learn all clauses relevant to the proof using
  Lemma \ref{lem_absorb_clauses}, which then allows unit
  propagation to derive UNSAT.
\end{proof}

We make some observations about our approach. First, our approach can
be easily lifted to any learning scheme other than the 1st-UIP scheme
that is currently the most widely used one.  Second, the set of
variables that constitute an LSR-backdoor may be disjoint from
the set of decisions made by the solver.  Third, the above approach
depends on the ability to restart, and therefore cannot be used to
compute LS-backdoors. In particular, the construction of the decision
sequence for Lemma \ref{absorb-lemma}, as described in
\cite{pipatsrisawat2009power}, requires restarting after every
conflict.  As an additional remark of practical importance, modern
CDCL solvers often perform clause minimization to shrink a newly
learnt clause before adding it to the clause database
\cite{sorensson2009minimizing}, which can have a significant impact on
performance. Intuitively, this procedure reduces the clause by finding
dependencies among its literals.  In order to allow clause
minimization in our experiments, for each clause $C$ we include all
clauses used by the minimizer in our set $R_{C}$.

For our empirical results, we modified an off-the-shelf solver
MapleSat \cite{liang2016learning}, by annotating each learnt clause $C^\prime$
with $D_{C^\prime}^*$. Note that we do not need to explicitly record
the set $R_{C^\prime}^*$ at any time.  As in the LS-backdoor
experiments in \cite{Dilkina2009}, different LSR-backdoors can be
obtained by randomizing the branching heuristic and polarity
selection. However, given the size and number of instances considered
here, we only perform one run per instance.

To ensure that our output is indeed an LSR-backdoor, we implemented a
verifier that works in three phases. First, we compute an LSR-backdoor
$B$ as above. Second, we re-run the solver, and record every learnt
clause $C$ such that $D_C^* \subseteq B$. We then run a final solver
with a modified branching heuristic, that iterates through the
sequence of learnt clauses from phase 2, absorbing each as described
in Lemma \ref{lem_absorb_clauses} (first checking that the clause is
either absorbed or 1-provable upon being reached in the sequence). We
ensure that the solver is in a final state by the end of the sequence.

\section{Separating LS and LSR-Backdoors}
\label{sec_lsr_theory}
In this section we prove that for certain kinds of formulas the
minimal LSR-backdoors are exponentially smaller than the minimal
LS-backdoors under the assumption that the learning scheme is 1st-UIP
and that the CDCL solver is only allowed to backtrack (and not
backjump). In \cite{Dilkina2009a}, the authors demonstrate that
LS-backdoors may be exponentially smaller than strong backdoors with
1st-UIP learning scheme but without restarts.

Let $n$ be a positive integer and let $X = \set{x_1, x_2, \ldots, x_n}$ be a set of Boolean
variables.  For any Boolean variable $x$, let $x^1$ denote the
positive literal $x$ and $x^{0}$ denote the negative literal $\neg x$.
For any assignment $\alpha \in \set{0,1}^n$ let $C_\alpha =
x_1^{1-\alpha_1} \vee x_2^{1-\alpha_2} \vee \ldots \vee
x_n^{1-\alpha_n}$ denote the clause on $x$ variables which is uniquely
falsified by the assignment $\alpha$.

Our family of formulas will be defined using the following template.  Let
${\cal O}$ be any total ordering of $\set{0,1}^n$; we write $<_{\cal O}$ to
denote the relation induced by this ordering.  The formula is defined
on the variables $x_1, x_2, \ldots, x_n$, and also three auxiliary
sets of variables $\set{q_\alpha}_{\alpha \in \set{0,1}^n},
\set{a_\alpha}_{\alpha \in \set{0,1}^n}, \set{b_\alpha}_{\alpha \in
  \set{0,1}^n}$.  Given an ordering ${\cal O}$ we define the
formula
\[{\cal F}_{\cal O} = \bigwedge_{\alpha \in \set{0,1}^n} (C_\alpha \vee \bigvee_{\alpha' \leq_{\cal O} \alpha} \neg q_{\alpha'}) \wedge (q_\alpha \vee a_\alpha) \wedge (q_\alpha \vee b_\alpha) \wedge (q_\alpha \vee \neg a_\alpha \vee \neg b_\alpha).\]
This family was introduced by Dilkina et al. \cite{Dilkina2009a}, where the formula using the lexicographic ordering
provides an exponential separation between the sizes of LS-backdoors and strong backdoors. 
Their key insight was that if a CDCL solver without restarts queried
the $x_1,\ldots,x_n$ variables in the lexicographic ordering of
assignments, it would learn crucial conflict clauses that would enable
the solver to establish the unsatisfiability of the instance without
having to query any additional variables. (By the term ``querying a
variable'' we mean that the solver assigns value to it and then performs any unit propagations.)
Since strong backdoors cannot benefit from clause learning they will necessarily have to query additional variables to hit any conflict.

We show that the same family of formulas (but for a different ordering ${\cal O}$) can be used to separate LS-backdoor size from LSR-backdoor size.
Observe that for \emph{any} ordering ${\cal O}$ the variables $x_1, x_2, \ldots, x_n$ form an LSR-backdoor for ${\cal F}_{\cal O}$.
\begin{lemma}\label{lem:lsr-backdoor}
  Let ${\cal O}$ be any ordering of $\set{0,1}^n$.
  The $X$-variables form an LSR-backdoor for the formula ${\cal F}_{\cal O}$.
\end{lemma}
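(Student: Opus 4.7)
The plan is to exhibit an explicit CDCL-with-restarts execution that branches only on $X$ and decides $\mathcal{F}_\mathcal{O}$. First I would observe that $\mathcal{F}_\mathcal{O}$ is unsatisfiable: the gadget clauses $(q_\alpha \vee a_\alpha) \wedge (q_\alpha \vee b_\alpha) \wedge (q_\alpha \vee \neg a_\alpha \vee \neg b_\alpha)$ force $q_\alpha$ to be true in any model, and with every $q_\alpha$ true the template clauses collapse to $\{C_\alpha : \alpha \in \{0,1\}^n\}$, which is unsatisfiable because each complete $X$-assignment $\alpha$ falsifies exactly $C_\alpha$. The execution I construct discovers these two facts in sequence: Phase~1 learns the unit clauses $\{q_\alpha\}$ one by one, and Phase~2 refutes the resulting $X$-only CNF.

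For Phase~1, I enumerate the $2^n$ assignments in the order given by $\mathcal{O}$ as $\alpha^{(1)} <_\mathcal{O} \alpha^{(2)} <_\mathcal{O} \cdots <_\mathcal{O} \alpha^{(2^n)}$, and argue by induction on $i$, maintaining the invariant that after step $i-1$ the unit clauses $\{q_{\alpha^{(1)}}\}, \ldots, \{q_{\alpha^{(i-1)}}\}$ sit at decision level~$0$. In step $i$ the solver restarts and branches on $x_1, \ldots, x_n$, each matching $\alpha^{(i)}$. Once $X$ is fully assigned, the template clause for $\alpha^{(i)}$ reduces to the unit literal $\neg q_{\alpha^{(i)}}$ (since $C_{\alpha^{(i)}}$ is falsified and every $\neg q_{\alpha^{(j)}}$ with $j<i$ was already falsified at level~$0$); UP then derives $a_{\alpha^{(i)}}$ and $b_{\alpha^{(i)}}$, and finally the clause $q_{\alpha^{(i)}} \vee \neg a_{\alpha^{(i)}} \vee \neg b_{\alpha^{(i)}}$ triggers a conflict. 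Resolving the 1st-UIP analysis against the two gadget reasons collapses the conflict clause to the unit $\{q_{\alpha^{(i)}}\}$, which is asserted at level~$0$ after backtracking, restoring the invariant.

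For Phase~2, once every $q_\alpha$ sits at level~$0$ the residual CNF on $X$ is exactly the unsatisfiable system $\{C_\alpha : \alpha \in \{0,1\}^n\}$. Branching on $x_1, \ldots, x_n$ to any assignment $\alpha$ falsifies $C_\alpha$ via UP, and standard 1st-UIP analysis plus backjumping (or, more concretely, induction on the number of $x$-variables still free at the current decision level) yields the empty clause in finitely many conflicts, with every branch still lying inside $X$. Concatenating Phase~1 and Phase~2 exhibits $X$ as an LSR-backdoor for $\mathcal{F}_\mathcal{O}$.

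The main obstacle I expect is verifying the 1st-UIP bookkeeping in Phase~1: one must confirm the analysis truly collapses to the unit $\{q_{\alpha^{(i)}}\}$ rather than stopping earlier at a different UIP, and one must rule out interference from unit propagations triggered by reduced template clauses of earlier $\alpha^{(k)}$. This second point is handled by observing that for $k > i$ the template clause for $\alpha^{(k)}$ still contains unassigned literals $\neg q_{\alpha^{(\ell)}}$ with $i \leq \ell \leq k$ and so cannot become unit, while for $k < i$ the reduced clause $C_{\alpha^{(k)}}$ can become unit only when $\alpha^{(k)}$ and $\alpha^{(i)}$ agree on all but one coordinate, in which case the propagated value coincides with the intended target $\alpha^{(i)}$, causing no premature conflict before the template clause for $\alpha^{(i)}$ fires.
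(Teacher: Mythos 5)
Your proposal is correct and follows essentially the same route as the paper's proof: enumerate the assignments in $\mathcal{O}$-order with a restart after each conflict to accumulate the unit clauses $q_\alpha$, then refute the residual $X$-only CNF by branching on $X$ alone. Your additional bookkeeping (verifying the 1st-UIP analysis collapses to the unit $q_{\alpha^{(i)}}$, and checking that premature propagations from earlier reduced clauses only push toward the intended target) is a more careful elaboration of details the paper leaves implicit, not a different argument.
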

\begin{proof}
  For each assignment $\alpha \in \set{0,1}^n$ (ordered by ${\cal O}$), assign $\alpha$ to the $x$ variables by decision queries.
  By the structure of ${\cal F}_{\cal O}$, as soon as we have a complete assignment to the $x$ variables, we will unit-propagate to a conflict and learn a $q_\alpha$ variable as a conflict clause; after that we restart.
  Once all of these assignments are explored we will have learned the unit clause $q_\alpha$ for \emph{every} assignment $\alpha$, and so we can just query the $X$ variables in any order (without restarts) to yield a contradiction, since every assignment to the $X$ variables will falsify the formula.
\end{proof}

Note that the formula ${\cal F}_{\cal O}$ depends on $N = O(2^n)$ variables, and so the size of this LSR-backdoor is $O(\log N)$.
Furthermore, observe that the $X$-variables will also form an LS-backdoor if we can query the assignments $\alpha \in \set{0,1}^n$ according to ${\cal O}$ without needing to restart --- for example, if ${\cal O}$ is the lexicographic ordering.
This suggests the following definition, which captures the orderings ${\cal O}$ of $\set{0,1}^n$ that can be explored by a CDCL algorithm without restarts:
\begin{definition}
  Let ${\cal T}_X$ be the collection of all depth-$n$ decision trees on $X$-variables, where we label each leaf $\ell$ of a tree $T \in {\cal T}_X$ with the assignment $\alpha \in \set{0,1}^n$ obtained by taking the assignments to the $X$-variables on the path from the root of $T$ to $\ell$.
  For any $T \in {\cal T}_X$, let ${\cal O}(T)$ be the ordering of $\set{0,1}^n$ obtained by reading the assignments labelling the leaves of $T$ from left to right.
\end{definition}
To get some intuition for our lower-bound argument, consider an ordering ${\cal O}(T)$ for some decision tree $T \in {\cal T}_X$.
By using the argument in Lemma \ref{lem:lsr-backdoor} the formula ${\cal F}_{{\cal O}(T)}$ will have a small LS-backdoor, obtained by querying the $X$-variables according to the decision tree $T$.
Now, take any two assignments $\alpha_i, \alpha_j \in {\cal O}(T)$ and let ${\cal O}(T)'$ be the ordering obtained from ${\cal O}(T)$ by swapping the indices of $\alpha_i$ and $\alpha_j$.
If we try and execute the same CDCL algorithm without restarts (corresponding to the ordering ${\cal O}(T)$) on the new formula ${\cal F}_{{\cal O}(T)'}$, the algorithm will reach an inconclusive state once it reaches the clause corresponding to $\alpha_j$ in ${\cal O}(T)'$ since at that point the assignment to the $X$-variables will be $\alpha_i$.
Thus, it will have to query at least one more $q$ variable (for instance, $q_{\alpha_j}$), which increases the size of the backdoor by one.
We can generalize the above argument to multiple ``swaps'' --- the CDCL algorithm without restarts querying the variables according to ${\cal O}(T)$ would then have to query one extra variable for every $q_\alpha$ which is ``out-of-order'' with respect to ${\cal O}(T)$.

This discussion leads us to the following complexity measure: for any ordering ${\cal O} \in \set{0,1}^n$ (not necessarily obtained from a decision tree $T \in {\cal T}_X$) and any ordering of the form ${\cal O}(T)$, let \[ d({\cal O}, {\cal O}(T)) = |\set{\alpha' \in \set{0,1}^n \st \exists \alpha \in \set{0,1}^n : \alpha' <_{\cal O} \alpha, \alpha <_{{\cal O}(T)} \alpha'}|.\]
Informally, $d({\cal O}, {\cal O}(T))$ counts the number of elements of ${\cal O}$ which are ``out-of-order'' with respect to ${\cal O}(T)$ as we have discussed above.
We are able to show that the above argument is fully general:

\begin{lemma}\label{lem:main}
  Let ${\cal O}$ be any ordering of $\set{0,1}^n$, and let ${\cal T}_X$ denote the collection of all complete depth-$n$ decision trees on $X$ variables.
  Then any learning-sensitive backdoor of ${\cal F}_{\cal O}$ has size at least \[\min_{T \in {\cal T}_X} d({\cal O}, {\cal O}(T)).\]
\end{lemma}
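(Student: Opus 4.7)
The plan is to prove the contrapositive: starting from any LS-backdoor $B$ of ${\cal F}_{\cal O}$ together with its witnessing CDCL execution $\pi$ (without restarts), I will extract a decision tree $T \in {\cal T}_X$ and inject the set counted by $d({\cal O}, {\cal O}(T))$ into the variables of $B$, forcing $|B| \geq d({\cal O}, {\cal O}(T)) \geq \min_{T' \in {\cal T}_X} d({\cal O}, {\cal O}(T'))$.

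I would begin with the observation that every first-type clause of ${\cal F}_{\cal O}$ is essential: deleting the clause indexed by $\alpha$ leaves a formula satisfied by $x = \alpha$, $q_\beta = 1$ for all $\beta$, and arbitrary $a_\beta, b_\beta$. Hence during $\pi$ each first-type clause must participate in at least one unit propagation or conflict. For each $\alpha$, let $t_\alpha$ be the first such moment, and let ${\cal O}^*$ be the ordering of $\set{0,1}^n$ obtained by sorting by $t_\alpha$. The key structural step is to show that ${\cal O}^* = {\cal O}(T)$ for some depth-$n$ decision tree $T \in {\cal T}_X$; the idea is that each first-trigger of $\alpha$ forces the current $X$-assignment to be consistent with $\alpha$, so the successive $X$-restrictions at these first-trigger moments fall along the leaves of a tree whose internal structure is inherited from the (projected) sequence of $X$-decisions in $\pi$.

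Next, for each $\alpha'$ counted by $d({\cal O}, {\cal O}^*)$, fix some $\alpha$ witnessing $\alpha' <_{\cal O} \alpha$ and $t_\alpha < t_{\alpha'}$. Because $\alpha' \leq_{\cal O} \alpha$, the literal $\neg q_{\alpha'}$ appears in the first-type clause of $\alpha$; for unit propagation on that clause to fire at time $t_\alpha$ with the $X$-assignment consistent with $\alpha$, the literal $\neg q_{\alpha'}$ must already be falsified, i.e.\ $q_{\alpha'} = 1$ at that time. Since $t_{\alpha'} > t_\alpha$, the unit clause $q_{\alpha'}$ has not yet been learned from $\alpha'$'s own clause, so $q_{\alpha'} = 1$ must come either from a direct decision (i.e., $q_{\alpha'} \in B$) or from unit propagation through one of $(q_{\alpha'} \vee a_{\alpha'})$, $(q_{\alpha'} \vee b_{\alpha'})$ (i.e., $a_{\alpha'} \in B$ or $b_{\alpha'} \in B$). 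In every case $B$ contains a variable from the triple $\{q_{\alpha'}, a_{\alpha'}, b_{\alpha'}\}$, and since these triples are disjoint across distinct $\alpha'$ we obtain the required injection of the $d$-counted set into $B$.

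The main obstacle is the structural claim that ${\cal O}^*$ is genuinely a tree ordering rather than an arbitrary permutation of $\set{0,1}^n$. Since $B$ may interleave non-$X$ decisions with $X$ decisions, and different branches of $\pi$ may even decide $X$ variables in different orders, this step requires a careful induction on $\pi$: track the current $X$-restriction at each node, appeal to the 1st-UIP scheme (which yields the unit clause $q_\alpha$ precisely when the clause of $\alpha$ is first triggered with $X = \alpha$), and show that one can re-group the first-trigger events into a depth-$n$ decision tree on $X$ whose left-to-right leaf order is exactly ${\cal O}^*$. The absence of restarts is essential here, since restarts would destroy the hierarchical nesting of the first-trigger instants along the search tree.
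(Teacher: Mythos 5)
Your counting idea---that every ``out-of-order'' $\alpha'$ must be paid for by a backdoor query into the gadget $\{q_{\alpha'},a_{\alpha'},b_{\alpha'}\}$---is the same one the paper uses, but your argument has two genuine gaps. The first is the structural claim you defer: that the first-trigger order ${\cal O}^*$ equals ${\cal O}(T)$ for some $T \in {\cal T}_X$. This is not a routine induction as stated, because a first-type clause need not first fire at a complete assignment $X=\alpha$: once enough $q$-literals are true it can unit-propagate an $x$-literal at an \emph{internal} node of the search, and conflicts inside the $q/a/b$ gadgets can occur at arbitrary depths interleaved with the $X$-decisions, so the first-trigger instants need not nest into a depth-$n$ tree at all. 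The paper's proof spends most of its effort removing exactly these possibilities by normalizing the backdoor first: (i) $B \subseteq X \cup Q$ (replacing $a_\alpha$/$b_\alpha$ queries by $q_\alpha$ queries), (ii) every $q_\alpha$ decision is made negatively and immediately yields the learned unit clause $q_\alpha$, and (iii) all $Q$-queries are moved to the front of the execution. Only after this normalization do all remaining conflicts occur at complete $X$-assignments, so that the execution literally is a complete depth-$n$ decision tree on $X$. Without some such normalization your ``careful induction on $\pi$'' does not obviously go through, and it is where the real work lies.

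The second gap is in your case analysis of how $q_{\alpha'}=1$ arises by time $t_\alpha$. You rule out only that the unit clause $q_{\alpha'}$ was learned from $\alpha'$'s \emph{own} first-type clause. But $\neg q_{\alpha'}$ occurs in every first-type clause indexed by a $\beta$ with $\alpha' \leq_{\cal O} \beta$, so the unit $q_{\alpha'}$ can be learned from a conflict triggered by some other clause $C_\beta \vee \cdots$ with $t_\beta < t_\alpha$---indeed this is precisely how the intended execution in Lemma~\ref{lem:lsr-backdoor} acquires its units---in which case none of $q_{\alpha'},a_{\alpha'},b_{\alpha'}$ need lie in $B$. Relatedly, at time $t_\alpha$ the literal $\neg q_{\alpha'}$ may be the one being propagated rather than already falsified (the paper's ``single unassigned $q_{\alpha^*}$'' case), so the premise of your injection fails on that element as well. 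The paper resolves both issues via the normalization above together with an accounting, leaf by leaf in the tree order, of which unit clauses can have been learned before leaf $\alpha$ is reached; you would need to supply an analogous argument before the disjoint-triples injection is sound.
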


This reduces our problem to finding an ordering ${\cal O}$ for which \emph{every} ordering of the form ${\cal O}(T)$ has many elements which are ``out-of-order'' with respect to ${\cal O}$ (again, intuitively for every mis-ordered element in the LS-backdoor we will have to query at least one more $q$-variable.)
\begin{lemma}
  For any $n > 4$ there exists an ordering ${\cal O}$ of $\set{0,1}^n$ such that for every decision tree $T \in {\cal T}_X$ we have \[ d({\cal O}, {\cal O}(T)) \geq 2^{n-2}.\]
\end{lemma}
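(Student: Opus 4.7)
The plan is to use the probabilistic method: sample ${\cal O}$ uniformly at random from the $(2^n)!$ orderings of $\set{0,1}^n$ and show that with positive probability every $T \in {\cal T}_X$ satisfies $d({\cal O}, {\cal O}(T)) \geq 2^{n-2}$. First I would reformulate $d$ combinatorially. Given any two orderings ${\cal O}, {\cal O}'$ of an $N$-element set (with $N = 2^n$), let $\pi$ be the permutation of $\{1,\ldots,N\}$ where $\pi(i)$ is the position in ${\cal O}'$ of the $i$-th element of ${\cal O}$. Unwinding the definition of $d$, the $i$-th element of ${\cal O}$ contributes to $d({\cal O}, {\cal O}')$ exactly when some later element in ${\cal O}$ has smaller ${\cal O}'$-position; equivalently, when $\pi(i)$ fails to be a \emph{right-to-left minimum} of the sequence $\pi(1), \pi(2), \ldots, \pi(N)$. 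Writing $R(\pi)$ for the number of right-to-left minima, this gives $d({\cal O}, {\cal O}(T)) = N - R(\pi_T)$, so it suffices to find a single ${\cal O}$ with $R(\pi_T) \leq 3 \cdot 2^{n-2}$ simultaneously for every $T \in {\cal T}_X$.

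Second, for any fixed $T$ the permutation $\pi_T$ induced by a uniformly random ${\cal O}$ is itself uniform on the symmetric group $S_N$, and a classical identity gives $R(\pi_T) = \sum_{i=1}^N X_i$ where the $X_i$ are \emph{independent} Bernoulli variables with $\Pr[X_i = 1] = 1/(N-i+1)$. Hence $\Exp[R(\pi_T)] = H_N \leq n$, the $N$-th harmonic number, placing the target $3 \cdot 2^{n-2}$ astronomically far in the tail. A standard Chernoff bound for sums of independent Bernoullis then gives $\Pr[R(\pi_T) \geq 3 \cdot 2^{n-2}] \leq (en / (3 \cdot 2^{n-2}))^{3 \cdot 2^{n-2}}$, whose $\log_2$ is approximately $-n \cdot 2^n$. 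On the other hand, a complete depth-$n$ decision tree on $X$ has $2^n - 1$ internal nodes, each labelled by one of at most $n$ variables, so $|{\cal T}_X| \leq n^{2^n - 1} \leq 2^{2^n \log_2 n}$. A union bound over $T$ then yields total failure probability at most $2^{2^n \log_2 n - \Omega(n \cdot 2^n)} < 1$ for $n$ sufficiently large, which with explicit constants already holds for $n > 4$.

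The step I expect to require the most care is pinning down the Chernoff and union-bound constants cleanly enough that the inequality genuinely holds from the stated threshold $n = 5$ onward, rather than only asymptotically. Concretely, I would replace the $\Omega(\cdot)$ in the exponent by an explicit linear function of $n$ (keeping the $\log_2 e$ from the Chernoff derivation and the factor $3/4$ arising from $3 \cdot 2^{n-2}/2^n$) and verify the small cases $n = 5, 6$ by direct numerical inequality, since the gap between $\log_2 n$ and $\Omega(n)$ in the exponent of $2^n$ is wide but not instantaneous. A secondary subtlety worth spelling out early is the quantifier translation ``$\pi_T(i)$ is a right-to-left minimum of $\pi_T$ iff the $i$-th element of ${\cal O}$ is \emph{not} counted by $d({\cal O}, {\cal O}(T))$,'' which is the bridge from the combinatorial definition of $d$ to the random-permutation statistic that Chernoff controls.
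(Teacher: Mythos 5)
Your reduction of $d({\cal O},{\cal O}(T))$ to $N-R(\pi_T)$, where $R$ counts right-to-left minima of the permutation sending ${\cal O}$-positions to ${\cal O}(T)$-positions, is correct, as is the representation of $R$ for a uniform permutation as a sum of independent Bernoullis with mean $H_N$. The route is genuinely different from the paper's, which is a short explicit construction: take ${\cal O}=\beta_1,\overline{\beta_1},\beta_2,\overline{\beta_2},\ldots$ (lexicographic order interleaved with bitwise complements), note that any ${\cal O}(T)$ puts all $2^{n-1}$ leaves agreeing with the root variable's first-queried bit into its first half, while ${\cal O}$ splits every coordinate-slice $\set{\alpha : \alpha[i]=b}$ evenly between its two halves, and read off $2^{n-2}$ inversions directly. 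That argument needs no concentration inequality and gives the bound for the exact thresholds stated.

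The quantitative step of your proposal, however, fails as written. With $|{\cal T}_X|\le n^{2^n-1}$ and the bound $\prob[R\ge t]\le (eH_N/t)^t$ at $t=3\cdot 2^{n-2}$, the union bound does not close at $n=5$: one gets $\prob[R\ge 24]\le (e\,H_{32}/24)^{24}\approx 10^{-8.1}$ against $5^{31}\approx 10^{21.7}$ trees, a deficit of some thirteen orders of magnitude, and comparing exponents ($\tfrac{7}{4}\log_2 n + O(1)$ versus $\tfrac{3}{4}n$) shows the deficit persists for several more values of $n$. So the ``direct numerical check for $n=5,6$'' you defer to would come back negative. The fix is that $n^{2^n-1}$ grossly overcounts ${\cal T}_X$: since each root-to-leaf path queries every variable exactly once, $|{\cal T}_X|=D(n)$ with $D(1)=1$ and $D(n)=n\,D(n-1)^2$, hence $\log_2|{\cal T}_X|=\sum_{k=1}^{n}2^{n-k}\log_2 k\le 0.74\cdot 2^n$, which sits safely below the $\approx 0.75\,n\,2^n$ decay of the tail bound; with this count ($D(5)=1{,}658{,}880\approx 10^{6.2}$) the union bound goes through for all $n\ge 5$, even with the crude Chernoff estimate. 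With that repair your argument is valid and has the virtue of showing that almost every ordering works, but as it stands the stated tree count is the step that breaks, and the paper's deterministic construction remains the simpler and tighter proof.
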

\begin{sketch}
  We define the ordering, and leave the full proof of correctness to the companion technical report~\cite{extended_paper_link}.
  Let $\beta_1, \beta_2, \ldots, \beta_N$ be the lexicographic ordering of $\set{0,1}^n$, and for any string $\beta_i$ define $\overline{\beta_i}$ to be the string obtained by flipping each bit in $\beta_i$.
  Then let ${\cal O}$ be the ordering \[ \beta_1, \overline \beta_1, \beta_2, \overline \beta_2, \ldots, \beta_{N/2}, \overline \beta_{N/2}.\]
\end{sketch}

\suppress{
\begin{sketch}
  Let $B$ be a minimal LS-backdoor of ${\cal F}_{\cal O}$.  Let $Q =
  \set{q_\alpha}_{\alpha \in \set{0,1}^n}$.  Without loss of
  generality we can assert that $B \subseteq X \cup Q$, since an
  LS-backdoor which queries $a_\alpha$ or $b_\alpha$ variables can be
  replaced with a backdoor querying $q_\alpha$ variables which is no
  larger.  Furthermore, we know that $X \subseteq B$.  To see this,
  observe first that $Q$ is \emph{not} an LS-backdoor of ${\cal
    F}_{\cal O}$, since querying all assignments to the $Q$ variables
  will leave the algorithm in an inconclusive state.  It follows that
  $B$ must contain \emph{some} $X$ variable; to see that it contains
  \emph{all} $X$ variables, it is enough to observe that in order to
  hit a conflict (even after querying all $Q$ variables) we must
  assign a value to every $X$ variable by the structure of ${\cal
    F}_{\cal O}$.  Thus $X \subseteq B \subseteq X \cup Q$ (and $|X|$
  represents the $n$ term in the statement of the lemma). Furthermore,
  observe that we can always assume that if the algorithm queries a
  $q_\alpha$ variable as a decision variable, then it queries the
  variable negatively first.  This is because querying $\neg q_\alpha$
  will cause the CDCL algorithm to hit a conflict and learn the unit
  clause $q_\alpha$, so we may replace such ``positive'' queries
  \emph{in-situ} with negative queries without affecting the
  simulation of the backdoor queries.  In turn, this implies that we
  may assume that all $q_\alpha$ variables occurring in the backdoor
  are queried at the beginning of the algorithm, leaving only queries
  to $X$ variables.  Since all $q_\alpha$ variables in the backdoor
  $B$ are queried at the beginning of the algorithm without loss of
  generality, by the structure of ${\cal F}_{\cal O}$ we can see that
  once we start querying $X$ variables, we will only hit a conflict
  once we have assigned values to \emph{all} of the $X$ variables.
  This implies that the query order for the $X$ variables takes the
  form of a complete depth-$n$ decision tree $T \in {\cal T}_X$, using
  the notation in the statement of the lemma.
	
  With this, we can prove that every clause learned by the algorithm
  is a unit clause of the form $q_\alpha$ for some $\alpha \in
  \set{0,1}^n$.  To see this, consider any time that we have hit a
  conflict in the process of simulating the variable queries of the
  backdoor $B$.  If we have hit a conflict from querying $\neg
  q_\alpha$, as described above, we are done, so suppose otherwise.
  By assumption, the only variables that $B$ contains other than $Q$
  variables are $X$ variables, thus, to hit a conflict we must have
  queried all of the $X$ variables by the structure of ${\cal F}_{\cal
    O}$.  In such a case, since we have not queried $\neg q_\alpha$
  for any $\alpha$, we must have unit-propagated $\neg q_\alpha$ for
  some $\alpha$ by the structure of ${\cal F}_{\cal O}$.  This will
  hit a conflict and learn the unit clause $q_\alpha$, proving our
  claim.
	
  Recalling that $X \leq B \leq X \cup Q$, to prove the lemma it
  suffices to prove that \[|B \cap Q| \geq \min_{T \in {\cal T}_X}
  d({\cal O}, {\cal O}(T)) \] since $|X| = n$.  Since $B$ is a
  backdoor, it must be that at every leaf of the tree $T$ the unit
  propagator will propagate the input to a conflict.  Consider the
  ordering of the assignments ${\cal O}(T)$ induced by this tree $T$.
  At each leaf of this tree, which corresponds to an assignment
  $\alpha$ to the $X$ variables, there are two possibilities: either
  we learn the unit clause $q_\alpha$ or not.  By the structure of
  ${\cal F}_{\cal O}$, we can only learn the unit clause $q_\alpha$ at
  the leaf if we have learned the unit clauses $q_{\alpha'}$ for each
  $\alpha' \leq_{{\cal O}} \alpha$.  Thus, for any index in the
  orderings ${\cal O}(T)$ and ${\cal O}$, if the index has different
  strings in the respective orderings, it follows that we will not be
  able to learn the unit clause $q_\alpha$, and so we will have to
  query at least one extra variable from $Q$ in the backdoor $B$.
  This proves that \[|B \cap Q| \geq \min_{T \in {\cal T}_X} d({\cal
    O}, {\cal O}(T)) \] and therefore the lemma.
\end{sketch}
}

\begin{theorem}
For every $n > 3$, there is a formula $F_n$ on $N = O(2^n)$ variables such that the minimal LSR-backdoor has $O(\log N)$ variables, but every LS-backdoor has $\Omega(N)$ variables.
	\label{thm_ls_vs_lsr}
\end{theorem}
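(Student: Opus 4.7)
The plan is to assemble the theorem directly from the three preceding lemmas, with the formula $F_n$ chosen to be ${\cal F}_{\cal O}$ for the specific ordering ${\cal O}$ constructed in the unnamed lemma just above. First I would verify the variable count: ${\cal F}_{\cal O}$ is built on the $n$ variables of $X$ together with the three auxiliary families $\{q_\alpha\}, \{a_\alpha\}, \{b_\alpha\}$ indexed by $\alpha \in \{0,1\}^n$, so the total number of variables is $N = n + 3 \cdot 2^n = O(2^n)$; equivalently $n = \Theta(\log N)$ and $2^{n-2} = \Theta(N)$.

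Next I would invoke Lemma \ref{lem:lsr-backdoor} to obtain the upper bound: for any ordering ${\cal O}$, the $X$-variables form an LSR-backdoor of ${\cal F}_{\cal O}$, and they number $n = O(\log N)$. This gives the claimed LSR upper bound, and importantly it holds regardless of which ordering is chosen in the template, so we are free to pick the specific ${\cal O}$ needed for the lower bound.

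For the LS-backdoor lower bound, I would chain together Lemma \ref{lem:main} and the lemma immediately preceding the theorem. Lemma \ref{lem:main} guarantees that any LS-backdoor of ${\cal F}_{\cal O}$ has size at least $\min_{T \in {\cal T}_X} d({\cal O}, {\cal O}(T))$, and the preceding lemma produces, for every $n > 4$, an ordering ${\cal O}$ such that $d({\cal O}, {\cal O}(T)) \geq 2^{n-2}$ for \emph{every} decision tree $T \in {\cal T}_X$. Combining these two bounds yields an LS-backdoor size of at least $2^{n-2} = \Omega(N)$. The hypothesis $n > 3$ in the theorem statement versus $n > 4$ in the lemma is a cosmetic gap that I would handle either by quoting the lemma's range directly or by absorbing the small cases $n \in \{4\}$ into the asymptotic constant.

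The only subtle step is making sure the LSR-upper-bound and LS-lower-bound refer to the same formula: both rely on the \emph{same} ${\cal F}_{\cal O}$, and Lemma \ref{lem:lsr-backdoor} is ordering-agnostic while the lower bound needs the carefully chosen interleaved ordering $\beta_1, \overline\beta_1, \beta_2, \overline\beta_2, \ldots$ from the construction. Once this alignment is made explicit, the theorem follows immediately. I do not expect any serious obstacle in writing this out; the real technical content lives in Lemma \ref{lem:main} and the ordering-construction lemma, and those are proved (or sketched) separately.
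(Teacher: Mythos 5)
Your proposal is correct and matches the paper's intended argument exactly: the paper leaves Theorem~\ref{thm_ls_vs_lsr} as the immediate composition of Lemma~\ref{lem:lsr-backdoor} (the $n = O(\log N)$ LSR upper bound, valid for any ordering), Lemma~\ref{lem:main}, and the ordering-construction lemma giving $d({\cal O},{\cal O}(T)) \geq 2^{n-2} = \Omega(N)$ for the interleaved ordering $\beta_1, \overline\beta_1, \ldots$, with $N = n + 3\cdot 2^n$. Your variable count, the observation that both bounds must be instantiated on the same ${\cal F}_{\cal O}$, and your handling of the $n>3$ versus $n>4$ discrepancy are all consistent with what the paper does (or leaves implicit).
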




\section{Future Work and Conclusions}

We presented a large-scale study of several structural parameters of
SAT instances. We showed that combinations of these features can lead
to improved regression models, and that in general, industrial
instances tend to have much more favorable parameter values than
crafted or random. Further, we gave examples of how such parameters
may be used as a ``lens'' to evaluate different solver heuristics. We
hope these studies can be used by complexity theorists as a guide for
which parameters to focus on in future analyses of CDCL. Finally, we
introduced LSR-backdoors, which characterize a sufficient subset of
the variables that CDCL can branch on to solve the formula.  In doing
so, we presented an algorithm to compute LSR-backdoors that exploits
the notion of clause absorption, and further showed that certain
formulas have exponentially smaller minimal LSR-backdoors than
LS-backdoors under the assumption that the CDCL solver only backtracks
(not backjumps) and uses the 1st-UIP learning scheme. From a theoretical point
of view, it remains open whether there is a separation between LS and
LSR-backdoors when backjumping is allowed.  On the empirical side, we
plan to investigate approaches to computing many small LSR-backdoors
to formulas. The intuition is that CDCL solvers may be more efficient
on Application instances because they have lots of LSR-backdoors and
the solver may ``latch'' on to a backdoor relatively easily, while
these solvers are less efficient for crafted instances because they
have too few backdoors.  Further, we plan to refine our results by
analyzing individual sub-categories of the benchmarks studied,
with a particular focus on scaling to crafted instances.

\suppress{
\section{EXTRA NOTES}
DO NOT FORGET:
\begin{enumerate}
	\item lsr backdoors, if we only allow restarts after conflict, does not change theoretical bound etc etc
	\item threats to validity
	\item lsr full learnt
	\item ls vs lsr on dilkina instances
	\item argue why other metrics are not good enough
	\item while the results are negative, we find it useful to guide the search for meaningful features for sat solver performance.
	\item Change figure captions + function names.
	\item hyphen between LSR-bd
	\item relevent learnt clause paper for related work
	\item mention why we dont try treeewidth
	
\end{enumerate}
\section{ LSR-backdoors}

	Definition
	
	Why are LSR-backdoors important (actually, do we \textit{really} have an answer here?)? For LS-backdoors, they state that learning is a pivotal part of CDCL, so there should be a backdoor definition that includes learning. I suppose we can say the same for restarts, but it sounds weaker in my mind.
	
	Relationship to LS-backdoors
	
	Computing LSR-backdoors
	
	Might want to put the sizes results for LSR-backdoors here (it's going to be awkward if we try to cram all results into a single section)
	
	Figure -- bar graph for each problem set

\section{Utilizing LSR-backdoors}

	State the hypothesis
	
	Describe MapleSAT, and the minor interface changes made for these experiments (i.e. we give a set of backdoor variables)
	
	Describe deletion strategy changes
	
	Describe bumping strategies (initial bump/solving bump)
	
	Figure -- cactus plots and scatterplots

\section{LSR-backdoors for Incremental Solving (tentative)}

	...
	
	Figure/Table for results

\section{Related Work}

	Williams
	
	LS-backdoor paper
	
	Limited Branching Paper
	
	...
	
	Treewidth (maybe)

\section{Conclusions and Future Work}

	\todo{it may be useful to look for cheap heuristics to approximate some of these metrics. While it may not be practical in general (for example, Kilby show that it is NP-hard to even approximate backbones), highly structured instances from the industrial settings may be more amenable to heuristic analysis.}

Notes:
\begin{itemize}
	\item focus on bcp subsolver since used in practice
	\item (3) correlation
	\item brief background on different backdoors
	\item related work on other comprehensive backdoor papers (tradeoffs, pvb?, limitations)
	
\end{itemize}

\textbf{Outline: }
\begin{itemize}
	\item Related Work
		\begin{itemize}
			\item Limitations of Restricted Branching (and older papers referenced there)
			\item Tradeoffs in Complexity
			\item Case study partial table
			\item Other concepts (treewidth, modularity, proof width)
		\end{itemize}
	\item Background

\end{itemize}
}

\bibliographystyle{splncs03}
\bibliography{refs}

\appendix
\suppress{
\begin{table}[t]
	\centering
	\begin{tabular}{|l|rrrr|}
		\hline
		Feature Set &     agile &  application &  crafted  &  random \\ \hline
		$V \oplus C \oplus \#min weak \oplus weak$ &  0.646 (70)   &   0.099 (244) & N/A &           N/A   \\
		$V \oplus C \oplus lsr \oplus lvr$ &   0.660 (448) &   0.182 (470) &  0.277 (221) &         0.241 (104)  \\
		$V \oplus C \oplus cmtys \oplus Q$ &    0.438 (1271) &   0.080 (935) &   0.186 (642) &             0.160 (122) \\ \hline
		$V \oplus cvr \oplus lvr \oplus Q$ &   \textbf{ 0.774 (448)} &   0.227 (390) &   0.298 (219) & 0.373 (104) \\
		$wvr \oplus lvr \oplus Q \oplus cmtysvr$ & N/A  &     \textbf{ 0.481 (142)} &   N/A & N/A \\
		$C \oplus cmtys \oplus lvr \oplus cmtysvr$ &   0.625 (448)  &  0.168 (390)  &   \textbf{0.426 (219)} & 0.336 (104) \\
		$V \oplus Q \oplus lsr \oplus cmtysvr$ &     0.642 (448) &  0.182 (390)   &   0.375 (219) & \textbf{0.431 (104)} \\
		\hline
	\end{tabular}
	\caption{ \todo{REMOVE ME WHEN READY. Very old data. Do not read into. Left here for page limit considerations} Adjusted R$^2$ values for the given features, compared to log of Maplecomsps' solving time. The number in parentheses indicates the number of instances that were considered in each case. The lower section of feature sets consider heterogeneous sets of features from different structural analyses.
	}
	\label{tab-regressions}
\end{table}
}
\newpage
\section{LSR Notation Example}
\label{app_lsr_notation}
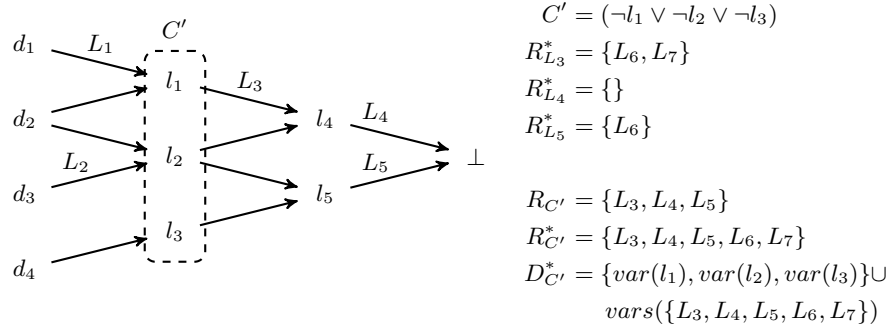
\begin{figure}
	\begin{minipage}{0.6\linewidth}
	\begin{center}
	\begin{tikzpicture}[->,>=stealth',shorten >=1pt,auto,node distance=3cm,
	thick,main node/.style={circle,draw=none},decision node/.style={circle,draw}] 

	\node[main node] at (0,3) (d1) {$d_1$};
	\node[main node] at (0,2) (d2) {$d_2$};
	\node[main node] at (0,1) (d3) {$d_3$};
	\node[main node] at (0,0) (d4) {$d_4$};
	
	\node[main node] at (2, 2.5) (l1) {$l_1$};
	\node[main node] at (2, 1.5) (l2) {$l_2$};
	\node[main node] at (2, 0.5) (l3) {$l_3$};
	\node[main node] at (2, 3.2) (cprime) {$C^\prime$};
	
	\node[main node] at (4, 2) (l4) {$l_4$};
	\node[main node] at (4, 1) (l5) {$l_5$};
	
	\node[main node] at (6, 1.5) (bot) {$\bot$};

	\draw[] (d1) -- (l1) node [midway, above] {$L_1$};
	\draw[] (d2) -- (l1);
	\draw[] (d2) -- (l2);
	\draw[] (d3) -- (l2) node [near start, above] {$L_2$};
	\draw[] (d4) -- (l3);

	\draw[] (l1) -- (l4) node [midway, above] {$L_3$};
	\draw[] (l2) -- (l4);
	\draw[] (l2) -- (l5);
	\draw[] (l3) -- (l5);
	
	\draw[] (l4) -- (bot) node [near start, above] {$L_4$};
	\draw[] (l5) -- (bot) node [near start, above] {$L_5$};

	\node[rectangle,inner sep=0.6mm,rounded corners, dashed,draw, fit=(l1) (l2) (l3)](lc) {};
	\end{tikzpicture}
	\end{center}
	\end{minipage}%
	\begin{minipage}{0.34\linewidth}
	\begin{align*}
	C^\prime &= (\neg l_1 \vee \neg l_2 \vee \neg l_3) \\
	R_{L_3}^* &= \{L_6, L_7\}\\
	R_{L_4}^* &= \{\}\\
	R_{L_5}^* &= \{L_6\}\\
	\\
	R_{C^\prime} &= \{L_3, L_4, L_5\}\\
	R_{C^\prime}^* &= \{L_3, L_4, L_5, L_6, L_7\}\\
	D_{C^\prime}^* &= \{var(l_1), var(l_2), var(l_3)\} \cup \\
	& \hphantom{asd} vars(\{L_3, L_4, L_5, L_6, L_7\})
	\end{align*}
	\end{minipage}
	\caption{Example conflict analysis graph depicting the set of relevant clauses and variables to some learnt clause $C^\prime$. Nodes are literals. Edges labeled with some $L_i$ are previously learnt clauses; all other edges depicting propagations are from the original formula $F$. The clauses $L_6, L_7$ used to derive $L_3$ and $L_5$ are not shown, but would be in the respective conflict analysis graphs of $L_3$ and $L_5$. The clauses $L_1$ and $L_2$ are not included in $R_{C^\prime}$ since they occur on the reason side of the graph. }
	\label{notation-figure}
	\end{figure}

\section{Full Proofs on Computing LSR-Backdoors}
\label{sec_appendix_proofs}

In this appendix, we present the proofs for Lemma \ref{lem_absorb_clauses} and Theorem \ref{thm_lsr_sat}. We define the following notation. We let $F$ be a formula, overloaded to also denote its set of clauses. Let $\Delta$ be a set of clauses and $C$ be some clause we would like to absorb. We define the function $absorb(\Delta, C) \rightarrow \Delta_2$, which produces a new clause set $\Delta_2$, such that $\Delta \subseteq \Delta_2$ and $\Delta_2$ absorbs $C$ by applying Lemma \ref{absorb-lemma}. If $C$ is already absorbed by $\Delta$, then $absorb(\Delta, C) = \Delta$, and if $C$ is not 1-provable with respect to $\Delta$, then $absorb(\Delta, C) =$ \textit{fail}. We overload $absorb$ to take a sequence of clauses, such that $absorb(\Delta, \langle C_1, C_2, \ldots, C_n \rangle) = \Delta_n$, which applies $absorb$ to the clauses in order, and for each intermediate $\Delta_i$ produced, it absorbs every clause $C_j, j \le i$. Again, if any clause $C_i$ is not 1-provable with respect to $\Delta_{i-1}$, then $absorb$ returns \textit{fail}.

	\begin{customlemma}{2}	
		Let $S$ be a CDCL solver used to determine the satisfiability of some formula $F$. Let $\Delta \subseteq S_L$ be a set of clauses learned during solving. Then a fresh solver $S^\prime$ can absorb all clauses in $\Delta$ by only branching on the variables in $D_{\Delta}^*$.
	\end{customlemma}
	\begin{proof}
		We show that $S^\prime$ can absorb all clauses in $R_\Delta^*$, which includes $\Delta$. Let $seq(R_\Delta^*)=\langle C_1, C_2, \ldots C_n \rangle$ be the sequence over $R_\Delta^*$ in the order that the original solver $S$ derived the clauses.
		Consider the first clause $C_1$. By construction, it does not depend upon any learnt clauses (i.e. it was derived from only original clauses), and since $S$ learned $C_1$, it must be 1-empowering and 1-provable with respect to the initial clause set. By Lemma \ref{absorb-lemma}, we can absorb $C_1$ by only branching on variables in $C_1$, which again by construction are in $D_\Delta^*$. We therefore have that $absorb(F, C_1) = \Theta_1$ absorbs $C_1$. 
		
		Suppose $absorb(F, \langle C_1, C_2, \ldots C_{k-1} \rangle) = \Theta_{k-1}$ absorbs the first $k-1$ clauses by only branching on the variables in $C_1, C_2, \ldots C_{k-1}$, and we wish to absorb $C_k$. There are two cases to consider. First, $C_k$ may already be absorbed by $\Theta_{k-1}$, since the clauses learnt by $absorb(\ldots)$ may absorb clauses in addition to $C_1, \ldots, C_{k-1}$, in which case we are done. So suppose $C_k$ is not absorbed by $\Theta_{k-1}$. Since every previous clause in $seq(R_\Delta^*)$ has been absorbed, we in particular have that the clauses in $R_{C_k}$ have been absorbed, so $C_k$ must be 1-provable. To see this, suppose instead of absorbing $R_{C_k}$ we learned the exact set of clauses in $R_{C_k}$. Then by construction, negating all literals in $C_k$ must lead to a conflict through unit propagation. Since we have instead absorbed $R_{C_k}$, any propagation that was used to derive the conflict must also be possible using the clauses that absorb $R_{C_k}$ (by definition of absorption).
		
		We also know that $C_k$ is 1-empowering with respect ot $\Theta_{k-1}$, since otherwise it is absorbed by definition, and we assumed this is not true.
		Therefore, we can invoke Lemma \ref{absorb-lemma}, such that $absorb(\Theta_{k-1}, C_k)=\Theta_k$,  which is derived by only branching on the variables in $C_k$. Again, $vars(C_k) \subseteq D_\Delta^*$ by construction.
	\end{proof}
	
	\begin{customthm}{1}[LSR Computation, SAT case]
		Let $S$ be a CDCL solver, $F$ be a satisfiable formula, and $T$ be the final trail of the solver immediately before returning SAT, which is composed of a sequence of decision variables $T_D$ and propagated variables $T_P$. For each $p\in T_P$, let the clause used to unit propagate $p$ be $l_p$ and the full set of such clauses be $L_P$.  Then $B = T_D \cup D_{L_p}^*$ constitutes an LSR-backdoor for $F$.
	\end{customthm}
	\begin{proof}
		Using Lemma \ref{lem_absorb_clauses}, we first absorb all clauses in $L_P$ by branching on $D_{L_P}^*$. 
		We can then restart the solver to clear the trail, and branch on the variables in $T_D$, using the same order and polarity as the final trail of $S$. If any $d \in T_D$ is already assigned due to learnt clauses used to absorb $L_{P}$, unit propagation will be able derive the literals propagated by $d$, since we have absorbed $l_d$. Note that with this final branching scheme, we can not reach a state where the wrong polarity of a variable in $T_D$ becomes implied through propagation (i.e. with respect to the final trail polarities), since the solver is sound and this would block the model found by the original solver $S$. 
	\end{proof}

\section{Proof Separating LS and LSR-Backdoors}
Here we present the full proofs for Section \ref{sec_lsr_theory}.

Let $n$ be a positive integer and let $X = \set{x_1, x_2, \ldots, x_n}$ be a set of boolean variables.
For any boolean variable $x$, let $x^1$ denote the positive literal $x$ and $x^{0}$ denote the negative literal $\neg x$.
For any assignment $\alpha \in \set{0,1}^n$ let $C_\alpha = x_1^{1-\alpha_1} \vee x_2^{1-\alpha_2} \vee \ldots \vee x_n^{1-\alpha_n}$ denote the clause on $x$ variables which is uniquely falsified by the assignment $\alpha$.
\suppress{
The following CNF formula ${\cal F}$ was introduced for the purpose of separating \emph{strong backdoors} from \emph{learning sensitive backdoors}.

The formula is defined on the variables $x_1, x_2, \ldots, x_n$, and also three auxiliary sets of variables $\set{q_\alpha}_{\alpha \in \set{0,1}^n}, \set{a_\alpha}_{\alpha \in \set{0,1}^n}, \set{b_\alpha}_{\alpha \in \set{0,1}^n}$:
\begin{align*}
  {\cal F} = & \bigwedge_{\alpha \neq \vec 1} (C_\alpha \vee \neg q_\alpha) \wedge (q_\alpha \vee a_\alpha) \wedge (q_\alpha \vee b_\alpha) \wedge (q_\alpha \vee \neg a_\alpha \vee \neg b_\alpha) \\
  & \wedge \left(C_{\vec 1} \vee \bigvee_{\alpha \in \set{0,1}^n} \neg q_\alpha\right) \wedge (q_{\vec 1} \vee a_{\vec 1}) \wedge (q_{\vec 1} \vee b_{\vec 1}) \wedge (q_{\vec 1} \vee \neg a_{\vec 1} \vee \neg b_{\vec 1}).
\end{align*}
In this note we modify this formula to obtain a CNF formula providing an exponential separation between \emph{learning sensitive backdoors} and \emph{learning-sensitive backdoors with restarts}.
}
Throughout we implicitly assume that the clause learning algorithm is \emph{1st-UIP}.

Our formula will be defined by the following template.
Let ${\cal O}$ be an ordering of $\set{0,1}^n$; we write $<_{\cal O}$ to denote the relation induced by this ordering.
Given an ordering ${\cal O}$ we can define the formula \[{\cal F}_{\cal O} = \bigwedge_{\alpha \in \set{0,1}^n} (C_\alpha \vee \bigvee_{\alpha' \leq_{\cal O} \alpha} \neg q_{\alpha'}) \wedge (q_\alpha \vee a_\alpha) \wedge (q_\alpha \vee b_\alpha) \wedge (q_\alpha \vee \neg a_\alpha \vee \neg b_\alpha).\]

\begin{customlemma}{3}
  Let ${\cal O}$ be any ordering of $\set{0,1}^n$.
  The $X$-variables form a learning-sensitive backdoor with restarts for the formula ${\cal F}_{\cal O}$.
\end{customlemma}
\begin{proof}
  Query the $x$ variables according to the ordering given by ${\cal O}$.
  As soon as we have a complete assignment to the $x$ variables, we will unit-propagate to a conflict and learn a $q_\alpha$ variable as a conflict clause; after that we restart.
  Once all such assignments are explored we can simply query the $X$ variables in any order (without restarts) to yield a contradiction, since every assignment to the $X$ variables will falsify the formula.
\end{proof}

Consider any decision tree $T$ of depth $n$ where we have queried the $X$ variables in \emph{any} order in the tree $T$.
From this tree we obtain a natural ordering ${\cal O}(T)$ of $\set{0,1}^n$ by reading off the strings labelling the leaves in left-to-right order.
Note that the orderings of the form ${\cal O}(T)$ are \emph{exactly} the orderings which can be generated by DPLL algorithms (or, more generally, a CDCL algorithm without restarts).
For any such complete decision tree $T$ and any ordering ${\cal O}$ of $\set{0,1}^n$, define \[ d({\cal O}, {\cal O}(T)) = |\set{\alpha' \in \set{0,1}^n \st \exists \alpha \in \set{0,1}^n : \alpha' <_{\cal O} \alpha, \alpha <_{{\cal O}(T)} \alpha'}|.\]
That is, $d({\cal O}, {\cal O}(T))$ is the number of strings $\alpha'$ which are ``out of order'' in ${\cal O}(T)$ with respect to ${\cal O}$.

\begin{customlemma}{4}\label{lem:main}
  Let ${\cal O}$ be any ordering of $\set{0,1}^n$, and let ${\cal T}_X$ denote the collection of all complete depth-$n$ decision trees on $X$ variables.
  Then any learning-sensitive backdoor of ${\cal F}_{\cal O}$ has size at least \[\min_{T \in {\cal T}_X} d({\cal O}, {\cal O}(T)).\]
\end{customlemma}
\begin{proof}
  Let $B$ be a minimal learning-sensitive backdoor of ${\cal F}_{\cal O}$.
  Let $Q = \set{q_\alpha}_{\alpha \in \set{0,1}^n}$.
  First we show that without loss of generality the following holds:
  \begin{enumerate}
  \item $B \subseteq X \cup Q$.
  \item All queries to $Q$-variables occur before any query to an $X$-variable.
  \end{enumerate}
  We begin with an observation: any query to a variable $q_\alpha$ can be assumed to be of the form $q_\alpha = F$.
  To see this, notice that querying $q_\alpha = F$ will immediately unit propagate to a conflict, and 1UIP-learning will immediately yield the unit clause $q_\alpha$.
  Thus we can always replace queries of the form $q_\alpha = T$ \emph{in-situ} with queries $q_\alpha = F$ without affecting the rest of the algorithm's execution.

  Let us first show that $B \subseteq X \cup Q$.
  Suppose that there is a variable $a_\alpha \in B$ (the case where $b_\alpha \in B$ is symmetric).
  We show that $a_\alpha$ can be replaced with $q_\alpha$.
  Consider the first time that $a_\alpha$ is queried as a decision variable.
  By the structure of ${\cal F}_{\cal O}$, we can assume that $q_\alpha$ has not been assigned before querying $a_\alpha$ as a decision variable (as we have shown above, if this is the case then $q_\alpha = T$, which eliminates the two clauses containing $a_\alpha$).
  If $a_\alpha = F$, then $q_\alpha = T$ is unit-propagated, and all clauses containing $a_\alpha$ or $b_\alpha$ will be removed.
  It follows that any conflict following this assignment must occur in a clause containing $C_{\alpha'}$ for some $\alpha'$, and thus replacing $a_\alpha$ with $q_\alpha$ will not affect the queries to the backdoor in this case.
  So, instead, suppose that $a_\alpha = T$.
  Clearly, in order for this assignment to have been necessary, the variable $q_\alpha$ must be assigned after assigning $a_\alpha$ (either by unit propagation or a decision).
  Observe that if $q_\alpha$ is set to true after assigning $a_\alpha = T$, then the two clauses containing $a_\alpha$ are removed and so assigning $q_\alpha = T$ to begin with would not have affected the execution of the algorithm on the backdoor.
  Similarly, setting $q_\alpha = F$ will unit propagate to a conflict in this case and we would learn the unit clause $q_\alpha = T$ as above, again showing that we could have replaced $a_\alpha$ with $q_\alpha$ without affecting the backdoor.
  Thus $a_\alpha$ can be replaced with $q_\alpha$ without loss of generality.

  Next we show that all decision queries to $Q$-variables can be made before queries to $X$-variables without loss of generality.
  As argued above, we can assume that when $q_\alpha$ is queried as a decision variable it is queried negatively (as this will lead immediately to learning the unit clause $q_\alpha$).
  Consider any conflict which occurs in the process of querying the variables of the backdoor that does not happen because of querying $\neg q_\alpha$.
  Since $B \subseteq X \cup Q$, by the structure of ${\cal F}_{\cal O}$ any such conflict must occur after assigning all $X$-variables to some string $\alpha \in \set{0,1}^n$.
  Corresponding to this total assignment $\alpha$ to the $X$-variables is the unique clause $C_\alpha$ which is falsified by this assignment, so consider the clause in ${\cal F}_{\cal O}$ of the form \[ C_\alpha \vee \bigvee_{\alpha' \leq_{\cal O} \alpha} \neg q_{\alpha'}.\]
  After assigning a subset of the $Q$-variables of $B$ and assigning all $X$-variables to $\alpha$ a conflict can occur in two ways:
  \begin{enumerate}
  \item $q_{\alpha'} = 1$ for all $\alpha' \leq_{\cal O} \alpha$ and the clause above is the conflict clause, or
  \item there exists a single $q_{\alpha^*}$ which is unassigned after assigning all of these variables; the above clause leads to assigning $q_{\alpha^*} = F$ and thus learning the unit clause $q_{\alpha^*}$, as argued above.
  \end{enumerate}
  In either case, moving all $q_\alpha$ queries to the beginning of the algorithm will not affect these conflicts: this is clear in case (1); in case (2) the only possibility is that $q_{\alpha^*}$ was queried at the beginning and so the conflict will change to a case (1) conflict.

  We are now ready to finish the proof of the lemma.
  We encode the execution of the CDCL algorithm as a decision tree, wherein we first query all $Q$-variables and then query all $X$-variables.
  Note that we must assign all $X$-variables to hit a conflict by the structure of ${\cal F}_{\cal O}$, and so we let $T$ denote the complete depth-$n$ decision tree querying the $X$-variables obtained from the CDCL execution tree.
  Recalling that $B \subseteq X \cup Q$, to prove the lemma it suffices to prove that \[|B \cap Q| \geq d({\cal O}, {\cal O}(T)).\]
  
  Since $B$ is a backdoor, it must be that at every leaf of the tree $T$ the unit propagator will propagate the input to a conflict.
  Consider the ordering of the assignments ${\cal O}(T)$ induced by this tree $T$.
  At each leaf of this tree, which corresponds to an assignment $\alpha$ to the $X$ variables, consider the clause \[ C_\alpha \vee \bigvee_{\alpha' \leq_{\cal O} \alpha} \neg q_{\alpha'}.\]
  There are two possibilities: either we learn a unit clause $q_{\alpha^*}$ for some $\alpha^* \leq_{\cal O} \alpha$ or we do not.
  If we have not then we must have already learned the unit clauses $q_{\alpha'}$ for all $\alpha' \leq_{\cal O} \alpha$.
  Otherwise, we must have learned the unit clauses $q_{\alpha'}$ for each $\alpha' \leq_{{\cal O}} \alpha$ satisfying $\alpha' \neq \alpha^*$.
  This implies that \[ B \cap Q \supseteq \bigcup_{\alpha \in \set{0,1}^n} \set{\alpha' \st \alpha' >_{{\cal O}(T)} \alpha, \alpha' \leq_{\cal O} \alpha},\] as for each $\alpha \in \set{0,1}^n$ we must query, at least, all variables $\alpha'$ which occur after $\alpha$ in ${\cal O}(T)$ but before $\alpha$ in ${\cal O}$.
  This proves the lemma.
\end{proof}

By the previous lemma, to lower bound the size of the learning sensitive backdoor it suffices to find an ordering ${\cal O}$ of $\set{0,1}^n$ for which any ordering ${\cal O}(T)$ produced by a decision tree has many ``inversions'' with respect to ${\cal O}$.

\begin{customlemma}{5}
  There exists an ordering ${\cal O}$ of $\set{0,1}^n$ such that for every decision tree $T \in {\cal T}_X$ we have \[ d({\cal O}, {\cal O}(T)) \geq 2^{n-2}.\]
\end{customlemma}
\begin{proof}
  Let $T \in {\cal T}_X$ be any decision tree.
  Let $N = 2^n$ and write ${\cal O}(T) = \alpha_1, \alpha_1, \ldots, \alpha_N$.
  We use the following key property of orderings generated from decision trees:

  \spcnoindent
  {\bf Key Property.} For any decision tree $T$ there is a coordinate $i \in [n]$ and $b \in \set{0,1}$ such that \[ \alpha_j[i] = b, \alpha_k[i] = 1-b\] for all $j = 1, 2, \ldots, N/2$ and $k = N/2 + 1, N/2 + 2, \ldots, N$.

  \spcnoindent
  This property is easy to prove: simply let $i$ be the index of the variable labelled on the root of the decision tree $T$.
  If the decision tree queries the bit $b$ in the left subtree then the first half of the strings in the ordering will have the $i$th bit of the string set to $b$, and the second half of strings set to $1-b$.

  To use this property, let $\beta_1, \beta_2, \ldots, \beta_N$ be the lexicographic ordering of $\set{0,1}^n$, and for any string $\beta_i$ define $\overline{\beta_i}$ to be the string obtained by flipping each bit in $\beta_i$.
  Then let ${\cal O}$ be the ordering \[ \beta_1, \overline \beta_1, \beta_2, \overline \beta_2, \ldots, \beta_{N/2}, \overline \beta_{N/2}.\]
  It follows that for each $i \in [n]$ and each $b \in \set{0,1}$, half of the strings $\alpha \in \set{0,1}^n$ with $\alpha[i] = b$ will be in the first half of the ordering ${\cal O}$ and half of the strings with $\alpha[i] = b$ are in the second half of the ordering ${\cal O}$.
  The lemma follows by applying the Key Property.
\end{proof}

\begin{corollary}
  There exists an ordering ${\cal O}$ such that any learning-sensitive backdoor for the formula ${\cal F}_{\cal O}$ has size at least $2^{n-2}$.
\end{corollary}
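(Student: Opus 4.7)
The plan is to assemble the theorem directly from the three preceding lemmas, since the heavy lifting has already been done in Lemmas \ref{lem:lsr-backdoor}, \ref{lem:main}, and the subsequent existence lemma that constructs a bad ordering. First I would fix $n > 3$, take the specific ordering ${\cal O}$ constructed in the existence lemma (the interleaving $\beta_1, \overline{\beta_1}, \beta_2, \overline{\beta_2}, \ldots$ of the lexicographic sequence and its bitwise complement), and set $F_n \coloneq {\cal F}_{\cal O}$ using the template from the excerpt. A quick variable count shows $F_n$ is defined on the $n$ variables of $X$ together with the three auxiliary families $\set{q_\alpha}, \set{a_\alpha}, \set{b_\alpha}$ indexed by $\set{0,1}^n$, giving a total of $N = n + 3 \cdot 2^n = O(2^n)$ variables, so in particular $n = O(\log N)$ and $2^{n-2} = \Omega(N)$.

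Next I would produce the LSR upper bound. By Lemma \ref{lem:lsr-backdoor}, applied to this specific ${\cal O}$, the set $X$ is already an LSR-backdoor of $F_n$; it has exactly $n = O(\log N)$ variables, which furnishes the required small LSR-backdoor.

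For the LS lower bound, I would invoke Lemma \ref{lem:main} to conclude that every LS-backdoor of $F_n$ has size at least $\min_{T \in {\cal T}_X} d({\cal O}, {\cal O}(T))$, and then plug in the existence lemma which asserts that this minimum is at least $2^{n-2}$ for the chosen ${\cal O}$. Chaining these bounds gives a lower bound of $2^{n-2} = \Omega(N)$ on any LS-backdoor, completing the exponential separation against the $O(\log N)$ LSR-backdoor.

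The main obstacle is nothing more than verifying that the asymptotic conversion between $n$ and $N$ is honest, which is the reason for the hypothesis $n > 3$ in the existence lemma; all of the genuine mathematical content lives in the already-proved lemmas (clause learning forces the backdoor to lie in $X \cup Q$ with $Q$-queries pushed to the start, the inversion count $d({\cal O}, {\cal O}(T))$ counts extra unit clauses the algorithm must learn, and the key coordinate-splitting property of decision trees combines with the interleaved ordering to guarantee a linear fraction of inversions no matter which decision tree $T$ is chosen). In particular, since LS-backdoors are by definition a restriction of LSR-backdoors (no restarts), no separate argument is needed that the $X$-variables do \emph{not} form a small LS-backdoor — the quantitative lower bound from Lemma \ref{lem:main} subsumes this automatically.
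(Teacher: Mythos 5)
Your derivation of the lower bound is exactly the paper's intended (and unwritten) proof of this corollary: instantiate Lemma \ref{lem:main} with the interleaved ordering ${\cal O}$ from the preceding existence lemma and chain the two bounds to get $2^{n-2}$. The additional material on the LSR upper bound and the $n$-versus-$N$ conversion belongs to the subsequent theorem rather than this corollary, but it is correct and does not affect the argument.
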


\begin{customthm}{3}
	For every $n > 3$, there is a formula $F_n$ on $N = O(2^n)$ variables such that the minimal LSR-backdoor has $O(\log N)$ variables, but every LS-backdoor has $\Omega(N)$ variables.
\end{customthm}

\end{document}